\def\UrlSpecials{\do\~{\kern -.15em\lower .7ex\hbox{~}\kern .04em}} \catcode`~=13 
\newcommand{\nn}{\nonumber}
\newcommand{\calE}{\mathcal{E}}
\newcommand{\calN}{\mathcal{N}}
\newcommand{\calP}{\mathcal{P}}
\newcommand{\calR}{\mathcal{R}}
\newcommand{\calS}{\mathcal{S}}
\newcommand{\ba}{\bm{a}}
\newcommand{\bA}{\mathbf{A}}
\newcommand{\be}{\bm{e}}
\newcommand{\bI}{\mathbf{I}}
\newcommand{\bs}{\bm{s}}
\newcommand{\bx}{\bm{x}}
\newcommand{\by}{\bm{y}}
\newcommand{\bz}{\bm{z}}
\newcommand{\rmd}{\mathrm{d}}
\newcommand{\bbE}{\mathbb{E}}
\newcommand{\bbN}{\mathbb{N}}
\newcommand{\bbP}{\mathbb{P}}
\newcommand{\bbR}{\mathbb{R}}
\DeclareMathAlphabet{\mathbsf}{OT1}{cmss}{bx}{n}
\DeclareMathAlphabet{\mathssf}{OT1}{cmss}{m}{sl}% slanted sans serif
\DeclareSymbolFont{bsfletters}{OT1}{cmss}{bx}{n}  
\DeclareSymbolFont{ssfletters}{OT1}{cmss}{m}{n}
\DeclareMathSymbol{\bsfGamma}{0}{bsfletters}{'000}
\DeclareMathSymbol{\ssfGamma}{0}{ssfletters}{'000}
\DeclareMathSymbol{\bsfDelta}{0}{bsfletters}{'001}
\DeclareMathSymbol{\ssfDelta}{0}{ssfletters}{'001}
\DeclareMathSymbol{\bsfTheta}{0}{bsfletters}{'002}
\DeclareMathSymbol{\ssfTheta}{0}{ssfletters}{'002}
\DeclareMathSymbol{\bsfLambda}{0}{bsfletters}{'003}
\DeclareMathSymbol{\ssfLambda}{0}{ssfletters}{'003}
\DeclareMathSymbol{\bsfXi}{0}{bsfletters}{'004}
\DeclareMathSymbol{\ssfXi}{0}{ssfletters}{'004}
\DeclareMathSymbol{\bsfPi}{0}{bsfletters}{'005}
\DeclareMathSymbol{\ssfPi}{0}{ssfletters}{'005}
\DeclareMathSymbol{\bsfSigma}{0}{bsfletters}{'006}
\DeclareMathSymbol{\ssfSigma}{0}{ssfletters}{'006}
\DeclareMathSymbol{\bsfUpsilon}{0}{bsfletters}{'007}
\DeclareMathSymbol{\ssfUpsilon}{0}{ssfletters}{'007}
\DeclareMathSymbol{\bsfPhi}{0}{bsfletters}{'010}
\DeclareMathSymbol{\ssfPhi}{0}{ssfletters}{'010}
\DeclareMathSymbol{\bsfPsi}{0}{bsfletters}{'011}
\DeclareMathSymbol{\ssfPsi}{0}{ssfletters}{'011}
\DeclareMathSymbol{\bsfOmega}{0}{bsfletters}{'012}
\DeclareMathSymbol{\ssfOmega}{0}{ssfletters}{'012}
\newcommand{\btheta}{\bm{\theta}}
\newcommand{\bepsilon}{\bm{\epsilon}}
\newtheorem{theorem}{Theorem} 
\newtheorem{lemma}{Lemma}
\newtheorem{assumption}{Assumption}
\newcommand{\qednew}{\nobreak \ifvmode \relax \else
      \ifdim\lastskip<1.5em \hskip-\lastskip
      \hskip1.5em plus0em minus0.5em \fi \nobreak
      \vrule height0.75em width0.5em depth0.25em\fi}
\theoremstyle{plain}
\theoremstyle{definition}
\theoremstyle{remark}
\newtheorem{remark}[theorem]{Remark}
\icmltitlerunning{Learning Single Index Models with Diffusion Priors} % Maybe shorter here [Shuchen]
\begin{document}

\twocolumn[
\icmltitle{Learning Single Index Models with Diffusion Priors}

% It is OKAY to include author information, even for blind
% submissions: the style file will automatically remove it for you
% unless you've provided the [accepted] option to the icml2025
% package.

% List of affiliations: The first argument should be a (short)
% identifier you will use later to specify author affiliations
% Academic affiliations should list Department, University, City, Region, Country
% Industry affiliations should list Company, City, Region, Country

% You can specify symbols, otherwise they are numbered in order.
% Ideally, you should not use this facility. Affiliations will be numbered
% in order of appearance and this is the preferred way.
\icmlsetsymbol{equal}{*}

\begin{icmlauthorlist}
\icmlauthor{Anqi Tang}{equal,uestc}
\icmlauthor{Youming Chen}{equal,uestc}
\icmlauthor{Shuchen Xue}{ucas,amss}
\icmlauthor{Zhaoqiang Liu}{uestc}
%\icmlauthor{}{sch}
%\icmlauthor{}{sch} , , , 
\end{icmlauthorlist}

\icmlaffiliation{uestc}{{University of Electronic Science and Technology of China}}
\icmlaffiliation{ucas}{University of Chinese Academy of Sciences}
\icmlaffiliation{amss}{Academy of Mathematics and Systems Science, CAS}

\icmlcorrespondingauthor{Zhaoqiang Liu}{zqliu12@gmail.com}

% You may provide any keywords that you
% find helpful for describing your paper; these are used to populate
% the "keywords" metadata in the PDF but will not be shown in the document
\icmlkeywords{Machine Learning, ICML}

\vskip 0.3in
]

% this must go after the closing bracket ] following \twocolumn[ ...

% This command actually creates the footnote in the first column
% listing the affiliations and the copyright notice.
% The command takes one argument, which is text to display at the start of the footnote.
% The \icmlEqualContribution command is standard text for equal contribution.
% Remove it (just {}) if you do not need this facility.

%\printAffiliationsAndNotice{}  % leave blank if no need to mention equal contribution
\printAffiliationsAndNotice{\icmlEqualContribution} % otherwise use the standard text.

\begin{abstract}
Diffusion models (DMs) have demonstrated remarkable ability to generate diverse and high-quality images by efficiently modeling complex data distributions. They have also been explored as powerful generative priors for signal recovery, resulting in a substantial improvement in the quality of reconstructed signals. However, existing research on signal recovery with diffusion models either focuses on specific reconstruction problems or is unable to handle nonlinear measurement models with discontinuous or unknown link functions. In this work, we focus on using DMs to achieve accurate recovery from semi-parametric single index models, which encompass a variety of popular nonlinear models that may have {\em discontinuous} and {\em unknown} link functions. We propose an efficient reconstruction method that only requires one round of unconditional sampling and (partial) inversion of DMs. Theoretical analysis on the effectiveness of the proposed methods has been established under appropriate conditions. We perform numerical experiments on image datasets for different nonlinear measurement models. We observe that compared to competing methods, our approach can yield more accurate reconstructions while utilizing significantly fewer neural function evaluations.
\end{abstract}  
\section{Introduction}
\label{sec:intro}

The objective of compressed sensing is to accurately recover the underlying high-dimensional sparse signal from a small number of linear measurements, with the measurement model as follows~\cite{4472240,Fou13,scarlett2019introductory}:
\begin{equation}
    \by = \bA \bx^* + \be,
\end{equation}
where $\bA \in \bbR^{m\times n}$ is the measurement matrix, $\bx^* \in \bbR^n$ is the signal to be recovered, $\be \in \bbR^m$ is the additive noise vector, and $\by \in \bbR^m$ is the observed vector. 

Although the linear measurement model utilized in compressed sensing can serve as a good testbed for demonstrating conceptual phenomena, in many real-world problems, it may not be justifiable or even feasible. For instance, the binary measurement model employed in 1-bit compressed sensing~\cite{boufounos20081} has attracted substantial interest. This is due to its cost-effective and efficient hardware. It has also been shown that 1-bit compressed sensing is robust against certain nonlinear distortions~\cite{laska2012regime}. The limitations of the linear data model have prompted the study of general nonlinear measurement models. Among these, the semi-parametric single index model (SIM) is arguably the most popular~\cite{horowitz2009semiparametric}. The SIM models the observations as follows:
 \begin{equation}\label{eq:sim}
     \by = f(\bA\bx^*),
 \end{equation}
where $\bA$ consists of i.i.d. standard Gaussian entries, and $f\,:\, \bbR \to \bbR$ is an {\em unknown} and possibly random nonlinear link function that is applied element-wise. The aim is to estimate the signal $\bx^*$ using the knowledge of $\bA$ and $\by$, despite the {\em unknown} nonlinear link $f$. It is well-known that in the SIM, $\bx^*$ is generally not identifiable~\cite{plan2016generalized}, since any scaling of $\bx^*$ can be incorporated into the unknown $f$. Consequently, it is common to impose the identifiability constraint $\|\bx^*\|_2=1$. 

 To enable efficient and reliable recovery of the underlying signal $\bx^*$, the predominant approach is to introduce structural modeling assumptions, such as sparse and conventional generative priors. For instance, the research works following \cite{bora2017compressed} typically achieve signal reconstruction under the assumption that the underlying signal lies within the range of a variational autoencoder (VAE) or a generative adversarial network (GAN)~\cite{van2018compressed,hand2018phase,heckel2019deep,wu2019deep,asim2020invertible,ongie2020deep,whang2020compressed,jalal2021instance,nguyen2021provable,liu2021towards,liu2022generative,genzel2022solving,liu2024generalized}. 

 Compared to conventional generative models like VAEs and GANs, diffusion models (DMs)~\cite{sohl2015deep, song2019generative, ho2020denoising} have recently demonstrated their superiority in terms of robust generative capabilities and learning complex data distributions~\cite{dhariwal2021diffusion}. DMs have been integrated into a wide variety of domains, leading to transformative progress and reshaping traditional approaches in areas such as vision and audio generation \cite{saharia2022photorealistic,kong2020diffwave}, and reinforcement learning \cite{chi2023diffusion,hansen2023idql}. In particular, when it comes to addressing signal recovery problems, diffusion models offer a more flexible and robust solution, facilitating high-quality reconstructions and improved performance~\cite{kawar2022denoising,whang2022deblurring,saharia2022image,saharia2022palette,alkan2023variational,chung2023parallel,cardoso2023monte,wang2022zero,chung2023diffusion,feng2023efficient,rout2023solving,wu2023practical,aali2024ambient,chung2024decomposed,dou2024diffusion,song2024solving,wu2024principled,sun2024provable}. 

Inspired by the strong capabilities of DMs in diverse real-world applications, in this study, we investigate the application of DMs to learn SIMs.

\subsection{Related Work}
\label{sec:rel_work}

Relevant works on signal recovery with conventional generative models (such as GANs and VAEs) have shown that learned priors can drastically reduce sample complexity in compressed sensing and related problems.
A comprehensive discussion of diffusion-based and conventional generative approaches is provided in Appendix~\ref{app:related_work}. In the following, we highlight diffusion-based signal recovery methods.

\paragraph{Signal recovery with diffusion models:} 
Diffusion models (DMs) have advanced signal recovery by modeling complex data distributions, with two main paradigms: task-specific and pre-trained approaches. Task-specific DMs are trained for particular problems, such as super-resolution~\cite{saharia2022image} or deblurring~\cite{ren2023multiscale}. Pre-trained DMs, used without task-specific training, guide reconstruction via score estimation (e.g.,~\cite{feng2023score}) or adapt frameworks like DDPM for linear measurements (e.g., DDRM~\cite{kawar2022denoising}). MCG~\cite{chung2022improving} and $\Pi$GDM~\cite{song2023pseudoinverse} are mainly designed for the linear setting. Additionally, $\Pi$GDM can be extended to certain nonlinear settings (where the link function is known) by leveraging a combination of pseudoinverse operations and nonlinear transformations. DPS~\cite{chung2023diffusion} and DAPS~\cite{zhang2024improving} are applicable in nonlinear settings as well, also primarily under the assumption that the link function is known.

However, the works mentioned above either concentrate on specific linear signal recovery problems or are incapable of handling nonlinear measurement models with {\em discontinuous} or {\em unknown} link functions. Perhaps most relevant to the present paper, the work \cite{meng2022quantized} uses SGMs to learn the underlying distribution of natural signals. By estimating the conditional score function, it enables better reconstruction quality when dealing with quantized compressed sensing problems. Nevertheless, the approach in \cite{meng2022quantized} is restricted to quantized measurements and is relatively slow in reconstruction. Even if utilizing the knowledge of the link function, it requires thousands of neural function evaluations (NFEs) to achieve a reasonably accurate recovery.
% add the linear or non-linear ;known or unknown
\subsection{Contributions}

The main contributions of this paper are threefold:

\begin{itemize}
    \item We propose an efficient method for accurately recovering the underlying signal from nonlinear observations of SIMs by utilizing diffusion priors. Our approach {\em does not require the knowledge of the link function}, and it only requires one round of unconditional sampling and (partial) inversion of DMs, resulting in a relatively small number of NFEs for the entire process.
    \item Based on heuristic theoretical results, we opt to initiate the inversion of DMs from an intermediate time rather than performing the full inversion of DMs. Under suitable conditions, we provide a theoretical analysis on the effectiveness of the corresponding approach. Additionally, we empirically observe that carrying out partial inversion results in significantly better reconstructions compared to performing full inversion.
    \item To validate the effectiveness of our method, we conduct numerical experiments on image datasets under distinct nonlinear measurement models. Notably, in the noisy 1-bit measurement setting, our approach not only achieves substantially faster computational performance but also outperforms competing methods in reconstruction accuracy, even when these methods explicitly incorporate knowledge of the link function. %algorithm proposed in the closely related work \cite{meng2022quantized}.
\end{itemize}

\subsection{Notation}

 We use upper and lower case boldface letters to denote matrices and vectors respectively. For any $M \in \bbN$, we use the shorthand notation $[M] = \{1,2,\ldots,M\}$, and we use $\bm{I}_M$ to denote the identity matrix in $\bbR^{M\times M}$. Given two sequences of real values $\{a_i\}$ and $\{b_i\}$, we write $a_i = O(b_i)$ if there exists an absolute constant $C_1$ and a positive integer $i_1$ such that for any  $i>i_1$, $|a_i| \le C_1 b_i$. We use $\calS^{n-1}$ to denote the unit sphere in $\bbR^n$, i.e., $\calS^{n-1} = \{\bs \in \bbR^n\,:\, \|\bs\|_2 =1\}$. For a generator $G\,:\, \bbR^n \to \bbR^n$, we use $\calR(G)$ to denote the range of $G$, i.e., $\calR(G)=\{G(\bz)\,:\, \bz \in \bbR^n\}$. 

\section{Preliminaries for Diffusion Models}
\label{sec:prelim}

Diffusion Models (DMs) define a forward process $\{\bm{x}_t\}_{t\in[0,T]}$ starting from $t=0$ and $\bm{x}_0$. For any $t\in[0,T]$, the distribution of $\bm{x}_t$ conditioned on $\bm{x}_0$ satisfies the following equation:
\begin{equation}
    q_{0t}(\bm{x}_t|\bm{x}_0)=\calN(\alpha_t \bm{x}_0,\sigma_t^2\bm{I}_n).\label{eq:cond_distr}
\end{equation} 
Here, $\alpha_t$ and $\sigma_t$ are non-negative, differentiable functions possessing bounded derivatives. Moreover, $\alpha_t$ is monotonically non-increasing with $\alpha_0 = 1$, while $\sigma_t$ is monotonically increasing. It has been demonstrated that the following stochastic differential equation (SDE) has the same transition distribution $q_{0t}(\bm{x}_t|\bm{x}_0)$ for any $t\in[0,T]$~\cite{song2021score,kingma2021variational}:
\begin{equation}\label{eq:forwardProcess}
    \rmd\bm{x}_t=f(t)\bm{x}_t \rmd t+g(t)\rmd\bm{w}_t, \quad \bm{x}_0\sim q_0(\bm{x}_0),
\end{equation}
where $\bm{w}_t\in\bbR^n$ is the standard Wiener process, and $q_t$ represents the marginal distribution of $\bm{x}_t$. Moreover, $f(t)$ is a scalar function known as the drift coefficient and $g(t)$ is a scalar function called the diffusion coefficient, which are specified as follows:
\begin{equation}\label{eq:ftgt}
    f(t)=\frac{\rmd\log\alpha_t}{\rmd t},\quad g^2(t)=\frac{\rmd\sigma_t^2}{\rmd t}-2\frac{\rmd\log\alpha_t}{\rmd t}\sigma_t^2.
\end{equation}

Under certain regularity conditions,~\citet{song2021score} demonstrate that the forward process in Eq.~\eqref{eq:forwardProcess} has an equivalent reverse process running from time $T$ to $0$, which starts with the marginal distribution $q_T(\bm{x}_T)$:
\begin{equation}
\rmd \bm{x}_t =[f(t)\bm{x}_t-g^2(t)\nabla_{\bm{x}}\log q_t(\bm{x}_t)]\rmd t +g(t)\rmd \bar{\bm{w}}_t,\label{eq:reverse_sde}
\end{equation}
where $\bar{\bm{w}}_t \in \bbR^n$ represents the Wiener process in reverse time, and $\nabla_{\bm{x}}\log q_t(\bm{x}_t)$ stands for the time-varying score function.

For more efficient sampling,~\citet{song2021score} further show  that the following probability flow ordinary differential equation (ODE) possesses the same marginal distribution at each time $t$ as that of reverse SDE in Eq.~\eqref{eq:reverse_sde}:
\begin{equation}\label{eq:Pflow_ODE}
    \frac{\rmd\bm{x}_t}{\rmd t}=f(t)\bm{x}_t-\frac{1}{2}g^2(t)\nabla_{\bm{x}}\log q_t(\bm{x}_t), \quad \bm{x}_T\sim q_T(\bm{x}_T).
\end{equation}
The only unknown term in Eq.~\eqref{eq:Pflow_ODE} is the time-varying score function $\nabla_{\bm{x}}\log q_t(\bm{x}_t)$. In practice, DMs typically estimate the scaled score function $-\sigma_t \nabla_{\bm{x}}\log q_t(\bm{x}_t)$ by using a noise prediction network $\bepsilon_{\btheta}(\bm{x}_t,t)$, and optimizes the parameter $\btheta$ through minimizing the following objective:
\begin{align}\label{eq:obj_DMs}
    &\int_0^T \bbE_{\bm{x}_0\sim q_0}\bbE_{\bepsilon\sim \calN(\bm{0},\bI_n)}\left[\|\bepsilon_{\btheta}(\alpha_t \bm{x}_0 + \sigma_t \bepsilon,t) - \bepsilon\|_2^2\right] \rmd t.
\end{align}
By substituting $\nabla_{\bx}\log p_t(\bx_t) = -\bepsilon_{\btheta}(\bx_t, t)/\sigma_t$ into Eq.~\eqref{eq:Pflow_ODE} and taking advantage of its semi-linear structure, we obtain the following numerical integration formula~\cite{lu2022dpm,zhang2022fast}:
\begin{equation}\label{eq:numerical_int}
    \bm{x}_t = e^{\int_s^t f(\tau)\rmd \tau} \bm{x}_s + \int_s^t \left(e^{\int_\tau^t f(r)\rmd r} \cdot\frac{g^2(\tau)}{2\sigma_\tau}\cdot \bepsilon_{\btheta}(\bm{x}_\tau,\tau)\right) \rmd \tau.
\end{equation}

Alternatively, we can use a data prediction network $\bm{x}_{\btheta}(\bm{x}_t,t)$ that satisfies the condition $\bepsilon_{\btheta}(\bm{x}_t,t) = (\bm{x}_t -\alpha_t \bx_{\btheta}(\bm{x}_t,t))/\sigma_t$~\cite{kingma2021variational,salimans2022progressive}. By substituting Eq.~\eqref{eq:ftgt} into Eq.~\eqref{eq:numerical_int} and using the transition between $\bepsilon_{\btheta}$ and $\bm{x}_{\btheta}$, we obtain the following integration formula~\cite{lu2022dpmp}:
\begin{equation}
    \bm{x}_t = \frac{\sigma_t}{\sigma_s} \bm{x}_s + \sigma_t \int_{\lambda_s}^{\lambda_t} e^{\lambda} \hat{\bm{x}}_{\btheta}(\hat{\bx}_\lambda,\lambda)\rmd\lambda,\label{eq:int_eq_data}
\end{equation}
where $\lambda_t :=\log(\alpha_t/\sigma_t)$ is strictly decreasing with respect to $t$ and has an inverse function $t_{\lambda}(\cdot)$. Thus, $\bm{x}_{\btheta}(\bm{x}_t,t)$ can be written as $\bm{x}_{\btheta}\big(\bm{x}_{t_{\lambda}(\lambda)},t_\lambda(\lambda)\big)=\hat{\bm{x}}_{\btheta}(\hat{\bm{x}}_\lambda,\lambda)$.

\subsection{The Sampling Process of Diffusion Models}
\label{sec:sampling_DMs}

The sampling process in DMs gradually denoises a noise vector to an image vector that approximately follows the same distribution as the training data. More precisely, if we divide the time interval $[\epsilon, T]$\footnote{In practice, it is common to set the end time of sampling as a small $\epsilon >0$ instead of $0$ to avoid numerical instability, as described in~\citep[Appendix D.1]{lu2022dpm}.} into $N$ sub-intervals with $\epsilon = t_N < t_{N-1} < \ldots < t_1 < t_0 = T$, and use a first-order numerical scheme for Eq.~\eqref{eq:int_eq_data}, for $i \in [N]$, we can derive the following iterative formula for the transition from time $t_{i-1}$ to $t_i$: 
\begin{equation}\label{eq:DDIM}
    \tilde{\bx}_{t_i} = \frac{\sigma_{t_i}}{\sigma_{t_{i-1}}}\tilde{\bm{x}}_{t_{i-1}} + \sigma_{t_i} \left(\frac{\alpha_{t_i}}{\sigma_{t_i}}-\frac{\alpha_{t_{i-1}}}{\sigma_{t_{i-1}}}\right)\cdot\bm{x}_{\btheta}(\tilde{\bm{x}}_{t_{i-1}},t_{i-1}),
\end{equation}
which is in line with the widely-used DDIM sampling method~\cite{song2020denoising}. 

Moreover, second- or third-order numerical schemes can be employed to enhance the sampling efficiency further \cite{lu2022dpmp,zhao2024unipc}. Specifically, the second-order multi-step numerical solver for Eq.~\eqref{eq:int_eq_data} yields the following iterative formula for $i \ge 2$~\cite{lu2022dpmp}:
\begin{align}
\label{eq:dpm2m++}
    &\tilde{\bx}_{t_i} = \frac{\sigma_{t_i}}{\sigma_{t_{i-1}}}\tilde{\bm{x}}_{t_{i-1}} - \alpha_{t_i} \left(e^{-h_i}-1\right) \nn\\
    &\quad \times\left(\big(1+\frac{1}{2r_i}\big)\bm{x}_{\btheta}\big(\tilde{\bm{x}}_{t_{i-1}},t_{i-1}\big)-\frac{1}{2r_i}\bm{x}_{\btheta}\big(\tilde{\bm{x}}_{t_{i-2}},t_{i-2}\big)\right),
\end{align}
where $h_i = \lambda_{t_i}-\lambda_{t_{i-1}}$ and $r_i = h_{i-1}/h_i$. We refer to the corresponding sampling method as DM2M for brevity. 

Let $\kappa_i$ be the function corresponding to the $i$-th sampling step. For instance, for the DDIM sampling method in Eq.~\eqref{eq:DDIM},\footnote{For higher-order sampling methods, we can derive analogous representations for $G$ in Eq.~\eqref{eq:dm2m_G} and $G_t$ in Eq.~\eqref{eq:partial_samp} using a recursive formula. An illustration of this process can be found in Lemma \ref{lem:lip_G_more} within Appendix \ref{app:proof_thm_new}.} we have 
\begin{equation}\label{eq:gi_ddim}
    \kappa_i(\bm{x}) := \frac{\sigma_{t_i}}{\sigma_{t_{i-1}}}\bm{x} + \sigma_{t_i} \left(\frac{\alpha_{t_i}}{\sigma_{t_i}}-\frac{\alpha_{t_{i-1}}}{\sigma_{t_{i-1}}}\right)\cdot\bm{x}_{\btheta}(\bm{x},t_{i-1}).
\end{equation}
Then, the entire sampling process can be expressed as $\tilde{\bm{x}}_{\epsilon} = G(\tilde{\bm{x}}_T)$, where the generator $G\,:\,\bbR^n \to \bbR^n$ is the composition of of $\kappa_1,\kappa_2,\ldots,\kappa_N$, i.e., 
\begin{equation}\label{eq:dm2m_G}
    G(\bm{x}) = \kappa_N\circ \cdots \circ \kappa_2\circ \kappa_1(\bm{x}). 
\end{equation}
In addition, for convenience, for any $t \in (\epsilon, T]$, we use $G_t$ to denote the sampling process from $t$ to $\epsilon$ (and thus $G_T = G$). Specifically, if letting $i_t = \max \{i \in [N]\,:\, t_{i-1} \ge t\}$, we have that $G_t$ is the composition of $\kappa_{i_t},\kappa_{i_t+1},\ldots,\kappa_N$, i.e.,
\begin{equation}\label{eq:partial_samp}
    G_t(\bm{x})= \kappa_N\circ \cdots \circ \kappa_{i_t+1}\circ \kappa_{i_t}(\bm{x}).
\end{equation}

\subsection{The Inversion of Diffusion Models}
\label{sec:inv_dms}

The inversion in DMs retraces the ODE sampling process with the aim of identifying the specific noise vector that generates a given image vector. This inversion is a crucial aspect in various applications, including image editing~\cite{hertz2023prompt,kim2022diffusionclip,wallace2023edict,patashnik2023localizing}, watermark detection~\cite{wen2023tree}, and image-to-image translation~\cite{kawar2022denoising,su2022dual}. Recently, the inversion of DMs has attracted increasing attention, and several relevant methods have emerged~\cite{pan2023effective,zhang2023exact,wallace2023edict,hong2024exact,wang2024belm}. 

For example, the naive DDIM inversion method~\cite{hertz2023prompt,kim2022diffusionclip} has the following iterative procedure:\footnote{For brevity, we assume that the sampling and inversion of DMs utilize the same time steps. In practice, they might employ different time steps and have a varying number of steps.}
\begin{equation}\label{eq:naive_inv}
    \hat{\bm{x}}_{t_{i-1}} = \frac{\sigma_{t_{i-1}}}{\sigma_{t_i}} \hat{\bm{x}}_{t_i} + \sigma_{t_{i-1}} \left(\frac{\alpha_{t_{i-1}}}{\sigma_{t_{i-1}}}-\frac{\alpha_{t_{i}}}{\sigma_{t_{i}}}\right) \bm{x}_{\btheta}\big(\hat{\bm{x}}_{t_{i}},t_{i-1}\big).
\end{equation}
The naive DDIM inversion method circumvents the computational overhead of the implicit numerical scheme by substituting $\bm{x}_{\btheta}\big(\hat{\bm{x}}_{t_{i-1}},t_{i-1}\big)$ with $\bm{x}_{\btheta}\big(\hat{\bm{x}}_{t_{i}},t_{i-1}\big)$. It has been widely used in imaging applications like image editing. Alternatively, by taking into account the first-order numerical discretization of Eq.~\eqref{eq:int_eq_data} from $t=\epsilon$ to $t=T$, it is straightforward to derive an iterative formula that is slightly different from Eq.~\eqref{eq:naive_inv}:
\begin{equation}
    \hat{\bm{x}}_{t_{i-1}} = \frac{\sigma_{t_{i-1}}}{\sigma_{t_i}} \hat{\bm{x}}_{t_i} + \sigma_{t_{i-1}} \left(\frac{\alpha_{t_{i-1}}}{\sigma_{t_{i-1}}}-\frac{\alpha_{t_{i}}}{\sigma_{t_{i}}}\right) \bm{x}_{\btheta}\big(\hat{\bm{x}}_{t_{i}},t_{i}\big).\label{eq:ddim_inv}
\end{equation}
Furthermore, similar to DM2M in Eq.~\eqref{eq:dpm2m++}, by considering the second-order numerical discretization of Eq.~\eqref{eq:int_eq_data} from $t=\epsilon$ to $t=T$, we can obtain the corresponding iterative formula for the inversion of DMs.

Letting $v_i$ be the function corresponding to the $i$-th inversion step. Then, the entire inversion procedure can be written as $\hat{\bx}_T = G^{\dagger}(\hat{\bx}_\epsilon)$, where the inversion operator $G^{\dagger}\,:\, \bbR^n \to \bbR^n$ is the composition of $v_1,v_2,\ldots,v_N$, i.e.,
\begin{equation}\label{eq:Ginv}
    G^{\dagger}(\bx) = v_1\circ v_2\circ \cdots \circ v_N(\bx).
\end{equation}
In addition, for convenience, for any $t \in [\epsilon, T)$, we use $G_t^{\dagger}$ to denote the inversion from $t$ to $T$ (and thus $G_\epsilon^{\dagger} = G^{\dagger}$). Specifically, if letting $j_t = \min \{j \in [M]\,:\, t_{j} \le t\}$, we have that $G_t^{\dagger}$ is the composition of of $v_{j_t},v_{j_t-1},\ldots,v_{1}$, i.e.,
\begin{equation}\label{eq:partial_inv}
    G_t^{\dagger}(\bm{x})= v_1\circ v_2\circ \cdots \circ v_{j_t}(\bx).
\end{equation}

\section{Setup and Approaches}
\label{sec:algo}

Recall that the nonlinear observation vector $\by \in \bbR^m$ is assumed to be generated in accordance with the SIM in Eq.~\eqref{eq:sim}, where $\bx^* \in \calS^{n-1}$ is the signal to be estimated. We follow \cite{liu2022non} to assume that the {\em unknown} link function $f$ satisfies the following conditions:
\begin{align}
    &\mu := \bbE_{\ba \sim \calN(\bm{0},\bI_n)}\left[f\left(\ba^T\bx^*\right)\ba^T\bx^*\right]  \ne 0, \label{eq:f_cond1} \\
    & \bbE_{\ba \sim \calN(\bm{0},\bI_n)}\left[f\left(\ba^T\bx^*\right)^4\right] < \infty. \label{eq:f_cond2}
\end{align}
The conditions in Eqs.~\eqref{eq:f_cond1} and \eqref{eq:f_cond2} are mild and hold true for various forms of $f$, such as $f(x) = \mathrm{sign}(x)$ for 1-bit measurements and $f(x)=x^3$ for cubic measurements. However, as noted in prior works such as \cite{liu2020generalized,liu2022non}, the condition in Eq.~\eqref{eq:f_cond1} is not satisfied for phase retrieval models with $f(x)=x^2$ or $f(x) = |x|$ (or their noisy versions).  
\begin{remark}
    The condition in Eq.~\eqref{eq:f_cond1} is a classic and crucial condition for SIMs. For example, it is (albeit implicitly) assumed in the seminal work~\cite{plan2016generalized} and in subsequent research that builds upon it. If this condition fails to hold, specifically when $\mu = 0$, the recovery of $\mu\bx^*$ as~\cite{plan2016generalized} and in our Eq.~\eqref{eq:lemma2Bd} below becomes meaningless. Additionally, we follow~\cite{liu2022non} to assume the condition in Eq.~\eqref{eq:f_cond2}, which generalizes the assumption that $f(\ba^T\bx^*)$ is sub-Gaussian (which is satisfied by quantized measurement models), and accommodates more general nonlinear measurement models such as cubic measurements with $f(x)=x^3$ and their noisy counterparts. 
\end{remark}

We utilize a generator $G\,:\, \bbR^n \to \bbR^n$ that corresponds to the sampling process of DMs ({\em cf.} Eq.~\eqref{eq:dm2m_G}) to model the underlying signal. Specifically, we assume that $\bx^* \in \mathcal{R}(G)$,\footnote{This assumption can also be approximately expressed as $\bx^* \sim q_0$ when $G$ corresponds to an excellent diffusion modeling of the target data distribution $q_0$. The assumption that $\bx^* \sim q_0$ is typically implicitly assumed by works that utilize diffusion models to solve compressed sensing problems.} where $\calR(G)$ denotes the range of $G$. This assumption is standard for nonlinear measurement models with generative priors and has also been made in prior works such as \cite{wei2019statistical,liu2020generalized}.

Let $G^{\dagger}\,:\, \bbR^n \to \bbR^n$ be an operator corresponding to an inversion method for DMs ({\em cf.} Eq.~\eqref{eq:Ginv}). The work~\cite{liu2022non} sets the estimated vector as $\calP_G(\bA^T\by/m)$, where $\calP_G$ denotes the projection operator onto the range of $G$. Since performing the projection step can be time-consuming even for conventional generative models like GANs, the authors of \cite{raj2019gan} use the composition of the pseudoinverse of the generator and the generator itself to approximate the projection operator. Building on these works, to reconstruct the direction of the signal $\bx^*$ from the knowledge of the measurement matrix $\bA \in \bbR^{m \times n}$ and the observed vector $\by \in \bbR^m$ (despite the {\em unknown} link $f$), a natural idea is to set the estimated vector $\hat{\bx}$ as follows:
\begin{equation}\label{eq:oneshot}
 \hat{\bx} = G\circ G^{\dagger}\left(\frac{1}{m}\bA^T\by\right).
\end{equation}
However, such a simple idea does not fully exploit the relationship between $\frac{1}{m}\bA^T\by$ and the underlying signal $\bx^*$, and it leads to unsatisfactory empirical results for diffusion models, as we will see in Section~\ref{sec:exp}. In particular, in the approach corresponding to Eq.~\eqref{eq:oneshot}, the inversion step starts from time $\epsilon$ (for an $\epsilon$ close to $0$; see Section~\ref{sec:sampling_DMs}), and this implicitly assumes that $\frac{1}{m}\bA^T\by$ approximately follows the target data distribution $q_0$. However, based on the following lemmas, we observe that $\frac{1}{m}\bA^T\by$ should instead be regarded as a noisy version of $\bx^* \sim q_0$, especially when the number of samples $m$ is relatively small compared to the ambient dimension $n$. First, it is easy to obtain the following lemma. 
\begin{lemma}\label{lem:gaussianVec}
    If $\bepsilon \sim \calN(\bm{0},\bI_n)$, then we have that the following holds with high probability\footnote{Here and in the rest of the paper, a statement is said to hold
 with high probability (w.h.p.) if it holds with probability at least $0.99$.} 
    \begin{equation}
        \|\bepsilon\|_\infty \le C\sqrt{\log(2n)},
    \end{equation}
    where $C$ is a sufficiently large positive constant. 
\end{lemma}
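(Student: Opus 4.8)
The plan is to prove this elementary maximal inequality for Gaussians by combining the standard one-dimensional Gaussian tail bound with a union bound over coordinates. First I would observe that each coordinate $\epsilon_i$ of $\bepsilon$ is a standard Gaussian random variable, so for every $t > 0$ the classical two-sided tail estimate $\bbP[|\epsilon_i| > t] \le 2 e^{-t^2/2}$ holds; this follows, for instance, from the one-sided bound $\bbP[\epsilon_i > t] \le e^{-t^2/2}$ together with symmetry, or directly from $\bbP[\epsilon_i > t] \le \tfrac{1}{t\sqrt{2\pi}} e^{-t^2/2}$.

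Next I would apply a union bound over the $n$ coordinates:
\begin{equation}
    \bbP\left[\|\bepsilon\|_\infty > t\right] \le \sum_{i=1}^n \bbP\left[|\epsilon_i| > t\right] \le 2n\, e^{-t^2/2}.
\end{equation}
It then remains to substitute $t = C\sqrt{\log(2n)}$ and check that the right-hand side is at most $1/100$. With this choice, $2n\, e^{-t^2/2} = 2n \cdot (2n)^{-C^2/2} = (2n)^{1 - C^2/2}$. Since $n \ge 1$ we have $2n \ge 2$, so whenever $C^2 > 2$ the exponent $1 - C^2/2$ is negative and hence $(2n)^{1-C^2/2} \le 2^{1-C^2/2}$. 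Taking $C$ large enough that $2^{1-C^2/2} \le 1/100$ — e.g. any $C \ge 4$, for which $C^2/2 - 1 \ge 7 > \log_2 100$ — gives $\bbP[\|\bepsilon\|_\infty > C\sqrt{\log(2n)}] \le 1/100$, which is exactly the claimed w.h.p. bound.

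I do not anticipate any genuine obstacle: this is a textbook-level estimate. The only points requiring mild care are (i) using the two-sided tail bound (the factor of $2$) so that $\|\bepsilon\|_\infty = \max_i |\epsilon_i|$, rather than just $\max_i \epsilon_i$, is controlled, and (ii) ensuring the constant $C$ can be chosen uniformly in $n$ — which is automatic here, since once $C^2 > 2$ the failure probability $(2n)^{1-C^2/2}$ is monotonically decreasing in $n$, so the bound at $n = 1$ is the worst case.
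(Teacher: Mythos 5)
Your proof is correct and follows essentially the same route as the paper's: the two-sided Gaussian tail bound $\bbP[|\epsilon_j| > u] \le 2e^{-u^2/2}$, a union bound over the $n$ coordinates, and the substitution $u = C\sqrt{\log(2n)}$. Your explicit verification that $C \ge 4$ suffices is a welcome extra detail the paper leaves implicit.
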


Furthermore, we have the following lemma, which demonstrates that $\frac{1}{m}\bA^T\by$ is a noisy version of $\mu \bx^*$ (see Eq.~\eqref{eq:f_cond1} for the definition of $\mu$) and approximately characterizes the noise level. The proofs of Lemmas~\ref{lem:gaussianVec} and~\ref{lem:noisy_vec_level} are placed in Appendix~\ref{app:proof_lemmaNoisyVec}.
\begin{lemma}\label{lem:noisy_vec_level}
 Under conditions for the link function $f$ in Eqs.~\eqref{eq:f_cond1} and~\eqref{eq:f_cond2}, we have the following holds w.h.p.:
    \begin{equation}\label{eq:lemma2Bd}
        \left\|\frac{1}{m}\bA^T\by - \mu \bx^*\right\|_\infty \le  \frac{C'\sqrt{\log(2n)}}{\sqrt{m}},
    \end{equation}
    where $C'$ is a sufficiently large positive constant. 
\end{lemma}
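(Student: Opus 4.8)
\textbf{Proof plan for Lemma~\ref{lem:noisy_vec_level}.}

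The plan is to analyze the vector $\frac{1}{m}\bA^T\by = \frac{1}{m}\sum_{k=1}^m f(\ba_k^T\bx^*)\ba_k$, where $\ba_k$ denotes the $k$-th row of $\bA$, and show each coordinate concentrates around $\mu\bx^*$. First I would compute the mean: for a fixed coordinate $j\in[n]$, the $j$-th entry of $\frac{1}{m}\bA^T\by$ is $\frac{1}{m}\sum_{k=1}^m f(\ba_k^T\bx^*)a_{k,j}$, and since the $\ba_k$ are i.i.d.\ standard Gaussian, $\bbE[f(\ba_k^T\bx^*)a_{k,j}]$ can be evaluated by splitting $\ba_k = (\ba_k^T\bx^*)\bx^* + \bp_k$ where $\bp_k$ is the component orthogonal to $\bx^*$ (independent of $\ba_k^T\bx^*$ by Gaussianity). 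Writing $a_{k,j} = (\ba_k^T\bx^*)x^*_j + p_{k,j}$, the cross term vanishes by independence and $\bbE[p_{k,j}]=0$, leaving $\bbE[f(\ba_k^T\bx^*)(\ba_k^T\bx^*)]x^*_j = \mu x^*_j$. So coordinate-wise the expectation is exactly $\mu\bx^*$, and the task reduces to a concentration bound on the deviation.

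The second step is concentration of each coordinate sum around its mean. The summands $f(\ba_k^T\bx^*)a_{k,j}$ are i.i.d.\ with finite fourth moment: by Cauchy--Schwarz, $\bbE[(f(\ba_k^T\bx^*)a_{k,j})^2] \le \sqrt{\bbE[f(\ba_k^T\bx^*)^4]}\sqrt{\bbE[a_{k,j}^4]}$, which is finite by Eq.~\eqref{eq:f_cond2} and the Gaussianity of $a_{k,j}$; denote this variance bound by $\sigma^2$. A clean route is to use Chebyshev's inequality on each coordinate: $\bbP\big(|\frac{1}{m}\sum_k f(\ba_k^T\bx^*)a_{k,j} - \mu x^*_j| > \tau\big) \le \frac{\sigma^2}{m\tau^2}$, then union bound over $j\in[n]$ and choose $\tau = \frac{C'\sqrt{\log(2n)}}{\sqrt{m}}$ so that the total failure probability is at most $n \cdot \frac{\sigma^2}{C'^2\log(2n)} $; this is not quite small enough, so instead I would invoke a sharper tail bound. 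Since the summands have all moments controlled (one can in fact show $f(\ba^T\bx^*)a_j$ is sub-exponential-type, or at least bound higher moments via $\bbE[f(\ba^T\bx^*)^4]<\infty$ together with Gaussian moment growth of $a_j$), a Bernstein-type or higher-moment Markov inequality gives, for each $j$, a tail of the form $\exp(-c\min(m\tau^2/\sigma^2, m\tau/K))$ or $O((\sigma^2/(m\tau^2))^{p/2})$ for large enough $p$; with $\tau \asymp \sqrt{\log(2n)/m}$ and $C'$ large, a union bound over $n$ coordinates yields total failure probability below $0.01$.

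The main obstacle is the tail control: the product $f(\ba^T\bx^*)a_{k,j}$ need not be sub-Gaussian (indeed for $f(x)=x^3$ it is heavy-tailed), so a naive Hoeffding-type bound fails, and I would need to either (i) work with higher moments and apply Markov's inequality to $|\frac{1}{m}\sum_k (\cdots)|^p$ for a suitably chosen $p$ growing with $\log n$, using Rosenthal's inequality to bound the $p$-th moment of the sum in terms of the $p$-th moment of a single summand, or (ii) invoke a sub-Weibull / sub-exponential Bernstein inequality after verifying the relevant Orlicz-norm bound from Eq.~\eqref{eq:f_cond2}. The bookkeeping — tracking how the constant $C'$ absorbs $\sigma$, $\mu$, and the implied constants from Rosenthal/Bernstein, and verifying the $\sqrt{\log(2n)/m}$ scaling survives the union bound — is routine once the right inequality is fixed. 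I expect the authors' proof (deferred to Appendix~\ref{app:proof_lemmaNoisyVec}) follows essentially this route, likely citing a standard concentration result for sums of independent random variables with bounded fourth (or higher) moments, mirroring the analysis in~\cite{liu2022non}.
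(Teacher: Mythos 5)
Your mean computation is correct and your diagnosis of the difficulty is accurate: the summands $f(\ba_k^T\bx^*)a_{k,j}$ are heavy-tailed, Chebyshev plus a union bound over $n$ coordinates fails, and some sharper device is needed. But both of your proposed fixes run into the same wall: Eq.~\eqref{eq:f_cond2} only guarantees a finite \emph{fourth} moment of $f(\ba^T\bx^*)$. Rosenthal with $p \asymp \log n$ requires moments of order $\log n$ of each summand, and a Bernstein/sub-Weibull inequality requires an Orlicz-norm (hence all-moments) bound; neither is available here. By H\"older you can control moments of the product $f(\ba^T\bx^*)a_{k,j}$ only up to order strictly less than $4$, so no per-coordinate tail better than polynomial of a fixed low degree is obtainable, and the union bound over $n$ coordinates cannot be closed along this route. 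This is a genuine gap, not bookkeeping.

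The paper's proof avoids the issue by pushing your own orthogonal decomposition one step further: instead of using $\ba_k = (\ba_k^T\bx^*)\bx^* + \bp_k$ only to compute the expectation, it keeps the split inside the sum, writing $\frac{1}{m}\bA^T\by - \mu\bx^* = \frac{1}{m}\sum_i y_i(\ba_i - g_i\bx^*) + \big(\frac{1}{m}\sum_i y_i g_i - \mu\big)\bx^*$ with $g_i = \ba_i^T\bx^*$. In the first term, the residual $r_{ij}$ (your $p_{k,j}$) is Gaussian and independent of $y_i = f(g_i)$, so \emph{conditionally on the $y_i$'s} the coordinate sum $\sum_i y_i r_{ij}$ is exactly Gaussian with variance $\sum_i y_i^2$; the fourth-moment assumption is used only to show the single scalar event $\frac{1}{m}\sum_i y_i^2 \le 2M_2$ holds w.h.p.\ (Chebyshev, no union bound), after which the Gaussian tail plus a union bound over $j\in[n]$ gives the $\sqrt{\log(2n)/m}$ rate. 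The second term is a single scalar deviation $|\frac{1}{m}\sum_i y_i g_i - \mu|$ multiplying $\bx^*$ with $\|\bx^*\|_\infty\le 1$, so plain Chebyshev suffices there and no union bound over coordinates is ever applied to a heavy-tailed quantity. You should restructure your argument around this conditional-Gaussianity observation rather than seeking a stronger concentration inequality for the raw product.
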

The comparison between Lemmas~\ref{lem:gaussianVec} and~\ref{lem:noisy_vec_level} inspires us to express $\frac{1}{m\mu}\bA^T\by$ as 
\begin{equation}
    \frac{1}{m\mu}\bA^T\by \approx \bx^* + \frac{C'}{C\mu\sqrt{m}} \bepsilon
\end{equation}
for $\bepsilon \sim \calN(\bm{0},\bI_n)$. Additionally, from Eq.~\eqref{eq:cond_distr}, we know that $\bm{x}_t$ can be written as $\bm{x}_t = \alpha_t \bm{x}_0 + \sigma_t \bepsilon_t = \alpha_t(\bm{x}_0 + (\sigma_t/\alpha_t)\bepsilon_t)$ with $\bepsilon_t\sim\calN(\bm{0},\bI_n)$. 
Since the nonlinear link function $f$ is {\em unknown} (and consequently, $\mu$ is also unknown), and $C, C'$ are undetermined positive constants, we employ a tuning parameter $C_s$ and find a time $t^*$ such that\footnote{If $\frac{C_s}{\sqrt{m}}>\frac{\sigma_T}{\alpha_T}$, set $t^*$ to $T$, and if $\frac{C_s}{\sqrt{m}}<\frac{\sigma_\epsilon}{\alpha_\epsilon}$, set $t^*$ to $\epsilon$.}
\begin{equation}\label{eq:calc_tstar}
    \frac{\sigma_{t^*}}{\alpha_{t^*}} = \frac{C_s}{\sqrt{m}},
\end{equation}
and start the inversion from time $t^*$ with the input vector being $\alpha_{t^*} C_{s}'\bA^T\by/m$, where $C_s'>0$ is a tuning parameter. 
\begin{remark}
Unlike prior works such as \cite{chung2022come,fabian2024adapt,wu2024principled} that identify an intermediate time for sampling, we seek an intermediate time mainly for the inversion of DMs. Moreover, this intermediate time is computed using theoretical findings in Lemma~\ref{lem:noisy_vec_level}.
\end{remark}

Then, the corresponding estimated vector is 
\begin{equation}\label{eq:sim-dmis}
    \hat{\bx} = G\circ G^{\dagger}_{t^*}\left(\frac{\alpha_{t^*} C_{s}'}{m}\bA^T\by\right),
\end{equation}
where $G^{\dagger}_{t^*}$ denotes the partial inversion operator (from $t^*$ to $T$; see Eq.~\eqref{eq:partial_inv}). One may also notice that since $\frac{1}{m}\bA^T\by$ can be regarded as a noisy version of $\bx^*$ and the sampling process has a denoising effect, we can simply sample from $t^*$ to $\epsilon$ without taking the inversion step. In this case, the corresponding estimated vector is
\begin{equation}\label{eq:sim-dms}
    \hat{\bx} = G_{t^*}\left(\frac{\alpha_{t^*} C_{s}'}{m}\bA^T\by\right),
\end{equation}
where $G_{t^*}$ denotes the partial sampling operator (from $t^*$ to $\epsilon$; see Eq.~\eqref{eq:partial_samp}). Indeed, such a simple and efficient approach can also lead to reasonably good reconstructions, although its performance is inferior to that of the approach corresponding to Eq.~\eqref{eq:sim-dmis}.
 
We refer to the approach for Eq.~\eqref{eq:oneshot} as SIM-DMFIS (SIMs using diffusion models with full inversion and sampling procedures), the approach for Eq.~\eqref{eq:sim-dms} as SIM-DMS since it only performs a (partial) sampling procedure, and the approach for Eq.~\eqref{eq:sim-dmis} as SIM-DMIS. We illustrate the differences among these three approaches in Figure~\ref{fig:variants}. For convenience, we present the SIM-DMIS approach in Algorithm~\ref{alg:DMIS}.
\begin{figure}[h]
    \includegraphics [width=1.0\columnwidth] {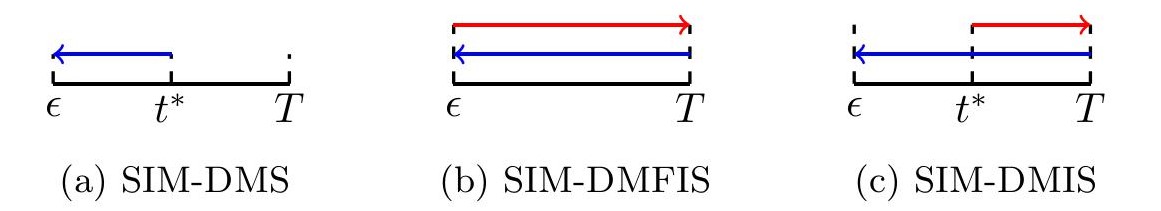}
    \caption{An illustration of our three approaches. For SIM-DMS, we only perform the sampling from $t^*$ to $\epsilon$. For SIM-DMFIS, we perform the full inversion and sampling procedures. For SIM-DMIS, we first perform the inversion from $t^*$ to $T$, and then perform the sampling from $T$ to $\epsilon$.}
    \label{fig:variants}
\end{figure}

\begin{algorithm}[H]
\caption{The SIM-DMIS approach}
\label{alg:DMIS}
\textbf{Input:} $\bA \in \mathbb{R}^{m \times n}$, $\bm{y} \in\bbR^m$, $\bm{x}_{\btheta}$, time steps $\epsilon = t_N < t_{N-1} < \ldots < t_1 < t_0 = T$, generator $G$ corresponding to the sampling process as in Eq.~\eqref{eq:dm2m_G}, partial inversion operator $G^{\dagger}_t$ as in Eq.~\eqref{eq:partial_inv}, tuning parameters $C_s$ and $C_{s}'$ \\
\textbf{1:} Calculate $t^*\in[\epsilon,T]$ as in Eq.~\eqref{eq:calc_tstar}. \\
\textbf{2:} Calculate the estimated vector $\hat{\bm{x}}$ as in Eq.~\eqref{eq:sim-dmis}.\\
\textbf{Output:} $\hat{\bm{x}}$ 
\end{algorithm}

\section{Theoretical Analysis}
\label{sec:theory}

In this section, we provide a theoretical analysis of the effectiveness of our proposed SIM-DMIS approach. Before presenting the analysis, we first present some preliminary results. Firstly, the following assumption states that the neural function $\bm{x}_{\btheta}(\bm{x},t)$ is Lipschitz continuous with respect to its first argument. This assumption has been widely adopted in relevant works such as \cite{lu2022dpm,chen2023improved,chen2023score,chen2024probability,chen2022sampling,de2022convergence,lee2022convergence,li2023towards}.

\begin{assumption}[Lipschitz Continuity of the Neural Function]
\label{assp:lip_cont}
    For any $t \in [0,T]$, there exists a positive constant $L_t >0$ such that the neural function $\bm{x}_{\btheta}(\bm{x},t)$ is $L_t$-Lipschitz continuous with respect to its first parameter $\bm{x}$. That is, it holds for all $\bm{x}_1,\bm{x}_2\in\bbR^n$ that $\|\bm{x}_{\btheta}(\bm{x}_1,t)-\bm{x}_{\btheta}(\bm{x}_2,t)\|_2 \le L_t\|\bm{x}_1-\bm{x}_2\|_2$. 
\end{assumption}

Based on Assumption~\ref{assp:lip_cont}, it can be easily demonstrated that the generator corresponding to popular sampling methods such as DDIM and DM2M ({\em cf.} Sec.~\ref{sec:sampling_DMs}) is Lipschitz continuous. Specifically, we have the following lemma.
\begin{lemma}\label{lem:lip_G}
    Suppose that the data prediction network $\bm{x}_{\btheta}$ satisfies Assumption~\ref{assp:lip_cont}. Then, if $G\,:\, \bbR^n \to \bbR^n$ is the generator as in Eq.~\eqref{eq:dm2m_G}, we have that $G$ is $L$-Lipschitz continuous with with $L>0$ being dependent on $\{\alpha_{t_i}\}_{i=0}^{N-1}$, $\{\sigma_{t_i}\}_{i=0}^{N-1}$, and $\{L_{t_i}\}_{i=0}^{N-1}$. 
\end{lemma}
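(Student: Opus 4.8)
\textbf{Proof plan for Lemma~\ref{lem:lip_G}.}

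The plan is to show that each individual sampling step $\kappa_i$ is Lipschitz continuous, and then compose the bounds. Recall from Eq.~\eqref{eq:gi_ddim} that for the DDIM scheme $\kappa_i(\bm{x}) = \frac{\sigma_{t_i}}{\sigma_{t_{i-1}}}\bm{x} + \sigma_{t_i}\bigl(\frac{\alpha_{t_i}}{\sigma_{t_i}}-\frac{\alpha_{t_{i-1}}}{\sigma_{t_{i-1}}}\bigr)\bm{x}_{\btheta}(\bm{x},t_{i-1})$, which is an affine combination of the identity map and the map $\bm{x}\mapsto\bm{x}_{\btheta}(\bm{x},t_{i-1})$. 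First I would apply the triangle inequality together with Assumption~\ref{assp:lip_cont}: for any $\bm{x}_1,\bm{x}_2\in\bbR^n$,
\begin{equation}
\|\kappa_i(\bm{x}_1)-\kappa_i(\bm{x}_2)\|_2 \le \left(\frac{\sigma_{t_i}}{\sigma_{t_{i-1}}} + \sigma_{t_i}\Bigl|\frac{\alpha_{t_i}}{\sigma_{t_i}}-\frac{\alpha_{t_{i-1}}}{\sigma_{t_{i-1}}}\Bigr| L_{t_{i-1}}\right)\|\bm{x}_1-\bm{x}_2\|_2,
\end{equation}
so $\kappa_i$ is $\ell_i$-Lipschitz with $\ell_i := \frac{\sigma_{t_i}}{\sigma_{t_{i-1}}} + \sigma_{t_i}\bigl|\frac{\alpha_{t_i}}{\sigma_{t_i}}-\frac{\alpha_{t_{i-1}}}{\sigma_{t_{i-1}}}\bigr| L_{t_{i-1}}$, a quantity depending only on $\alpha_{t_{i-1}},\alpha_{t_i},\sigma_{t_{i-1}},\sigma_{t_i},L_{t_{i-1}}$.

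Next I would use the elementary fact that the composition of Lipschitz maps is Lipschitz with constant equal to the product of the individual constants. Since $G = \kappa_N\circ\cdots\circ\kappa_1$ by Eq.~\eqref{eq:dm2m_G}, this immediately yields that $G$ is $L$-Lipschitz with $L := \prod_{i=1}^{N}\ell_i$, which depends only on $\{\alpha_{t_i}\}_{i=0}^{N-1}$, $\{\sigma_{t_i}\}_{i=0}^{N-1}$, and $\{L_{t_i}\}_{i=0}^{N-1}$, as claimed. (The indices run up to $N-1$ rather than $N$ because each $\kappa_i$ only involves $t_{i-1}$ in its nonlinear term and the linear coefficients at $t_N=\epsilon$ contribute nothing new beyond the ratios already listed; one can absorb any boundary term into the constant.)

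For the higher-order DM2M scheme the argument is structurally the same but slightly more involved, since from Eq.~\eqref{eq:dpm2m++} the step $i$ map depends on the two previous iterates $\tilde{\bm{x}}_{t_{i-1}}$ and $\tilde{\bm{x}}_{t_{i-2}}$. The clean way to handle this is to track the state pair $(\tilde{\bm{x}}_{t_{i-1}},\tilde{\bm{x}}_{t_{i-2}})$ and show the two-step transition map on $\bbR^n\times\bbR^n$ is Lipschitz, again by triangle inequality plus Assumption~\ref{assp:lip_cont} applied at times $t_{i-1}$ and $t_{i-2}$, with a constant built from $\alpha$'s, $\sigma$'s, the ratios $r_i$ (which are determined by the $\lambda_{t_i}$, hence by $\alpha$'s and $\sigma$'s), and the $L_{t_i}$'s; the composition argument then goes through verbatim. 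I expect this bookkeeping for the multi-step solver — writing the recursion for the joint-state Lipschitz constant and confirming it closes in terms of the stated quantities — to be the only mildly delicate part; the paper defers exactly this to Lemma~\ref{lem:lip_G_more} in Appendix~\ref{app:proof_thm_new}. The first-order (DDIM) case, which is all that is needed for the main thrust, is entirely routine.
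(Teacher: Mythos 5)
Your proposal is correct and follows essentially the same route as the paper's proof (Lemma~\ref{lem:lip_G_more} in Appendix~\ref{app:proof_thm_new}): bound each $\kappa_i$ via the triangle inequality and Assumption~\ref{assp:lip_cont}, multiply the constants under composition for DDIM, and handle the multi-step DM2M solver by a recursion — your joint-state $(\tilde{\bm{x}}_{t_{i-1}},\tilde{\bm{x}}_{t_{i-2}})$ formulation is just a repackaging of the paper's direct recursion $\tilde{L}_i$ on the Lipschitz constants with respect to $\tilde{\bm{x}}_{t_0}$. The only cosmetic difference is your absolute value around $\frac{\alpha_{t_i}}{\sigma_{t_i}}-\frac{\alpha_{t_{i-1}}}{\sigma_{t_{i-1}}}$, which is harmless since $\alpha_t/\sigma_t$ is decreasing in $t$ and the time steps decrease, so that difference is nonnegative anyway.
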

Lemma~\ref{lem:lip_G} shows that it is reasonable to assume that $G$ is Lipschitz continuous. The proof of Lemma~\ref{lem:lip_G} and the precise characterization of the Lipschitz constant $L$ for DDIM and DM2M are deferred to Appendix~\ref{app:proof_thm_new}. 

Then, we have the following theorem regarding the effectiveness of the SIM-DMIS approach as presented in Eq.~\eqref{eq:sim-dmis}. Specifically, for $t \in [\epsilon,T]$, it offers an upper bound for the distance between $G\circ G^{\dagger}_t(\bar{\bx}_t)$ and $\bar{\bx}_\epsilon$, where $\bar{\bx}_t$ is a sample of $q_t$ and $\bar{\bx}_\epsilon$ is on the same ODE trajectory as $\bar{\bx}_t$. When $\epsilon$ is sufficiently small and the (scaled) data prediction network $\bx_{\btheta}$ provides a good approximation to the ground-truth score $\nabla \log q_t$~\cite{chen2022sampling,li2023towards}, $\bar{\bx}_\epsilon$ will be close to the ground-truth data following $q_0$ that corresponds to $\bar{\bx}_t$. Additionally, it has been demonstrated in \cite{lu2022dpm,lu2022dpmp} that under certain regularity conditions, DDIM and DM2M are first- and second-order numerical sampling methods for Eq.~\eqref{eq:int_eq_data}, respectively. Moreover, by considering the first- and second-order numerical discretization of Eq.~\eqref{eq:int_eq_data} from $t=\epsilon$ to $t=T$, we similarly obtain the first- and second-order numerical inversion methods for Eq.~\eqref{eq:int_eq_data} respectively (for instance, the inversion method in Eq.~\eqref{eq:ddim_inv} is a first-order numerical inversion method). 

Therefore, based on Theorem~\ref{thm:main_new}, if the term $\alpha_{t^*} C_s' \bA^T\by/m$ in Eq.~\eqref{eq:sim-dmis} can be approximately expressed as $\alpha_{t^*} \bx^* + \sigma_{t^*}\bepsilon$ with $\bepsilon$ being a standard normal vector, $\hat{\bx} := G\circ G^\dagger_{t^*}\big(\alpha_{t^*} C_s' \bA^T\by/m\big)$ is close to $\bx^*$ under appropriate conditions. The proof of Theorem~\ref{thm:main_new} can be found in Appendix~\ref{app:proof_thm_new}. 

\begin{theorem}\label{thm:main_new}
    For $t \in [\epsilon, T]$, let $\bar{\bx}_t \in \bbR^n$ be a sample of $q_t$, and let $\bar{\bx}_\epsilon = \frac{\sigma_\epsilon}{\sigma_t}\bar{\bx}_t + \sigma_\epsilon \int_{\lambda_t}^{\lambda_\epsilon} e^\lambda \hat{\bx}_{\btheta}(\hat{\bx}_\lambda,\lambda)\rmd \lambda$, which is the analytic solution of Eq.~\eqref{eq:int_eq_data} with respect to the initial vector $\bar{\bx}_t$. For $k_1, k_2 \in \{1,2,3\}$, suppose that $G^{\dagger}$ corresponds to a $k_1$-th order numerical inversion method for Eq.~\eqref{eq:int_eq_data}, and $G^\dagger_t$ denotes the partial inversion operator from $t$ to $T$ (see Eq.~\eqref{eq:partial_inv}). Suppose that the generator $G$ corresponds to a $k_2$-th order numerical sampling method for Eq.~\eqref{eq:int_eq_data} and $G$ is $L$-Lipschitz continuous. Then, under certain regularity conditions,\footnote{These conditions are similar to those in~\citep[Appendix~B.1]{lu2022dpm} and are listed in Appendix~\ref{app:proof_thm_new}.} we have the following:
    \begin{equation}
        \|\bar{\bx}_\epsilon -  G\circ G^{\dagger}_t(\bar{\bx}_t)\|_2 = O\left(\sqrt{n} \left(h_{\max}^{k_2} + L h_{\max}^{k_1}\right)\right),
    \end{equation}
    where $h_{\max} = \max_{i \in [N]} (\lambda_{t_{i}}-\lambda_{t_{i-1}})$.
\end{theorem}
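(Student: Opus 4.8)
The plan is to control the error of the composed map $G\circ G^\dagger_t$ by splitting it through the exact ODE solution operator, and then tracking (i) the discretization error incurred by the numerical inversion $G^\dagger_t$, and (ii) the discretization error incurred by the numerical sampler $G$, with the Lipschitz constant $L$ of $G$ entering only as an amplification factor for the former. Concretely, let $\Phi_{t\to T}$ denote the exact solution operator of Eq.~\eqref{eq:int_eq_data} run from time $t$ to time $T$, and $\Phi_{T\to\epsilon}$ the exact solution operator run from $T$ to $\epsilon$; these are genuine inverses of one another along the ODE flow, so $\Phi_{T\to\epsilon}\circ \Phi_{t\to T}(\bar{\bx}_t)=\bar{\bx}_\epsilon$ by the definition of $\bar{\bx}_\epsilon$ in the statement. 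I would then write
\begin{align}
  \|\bar{\bx}_\epsilon - G\circ G^{\dagger}_t(\bar{\bx}_t)\|_2
  &\le \|\Phi_{T\to\epsilon}\circ\Phi_{t\to T}(\bar{\bx}_t) - \Phi_{T\to\epsilon}\circ G^{\dagger}_t(\bar{\bx}_t)\|_2 \nn\\
  &\quad + \|\Phi_{T\to\epsilon}\circ G^{\dagger}_t(\bar{\bx}_t) - G\circ G^{\dagger}_t(\bar{\bx}_t)\|_2,
\end{align}
bounding the first term by a Lipschitz bound on $\Phi_{T\to\epsilon}$ times the inversion error $\|\Phi_{t\to T}(\bar{\bx}_t)-G^{\dagger}_t(\bar{\bx}_t)\|_2$, and the second term by the sampling error of $G$ evaluated at the point $G^{\dagger}_t(\bar{\bx}_t)$.

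For the inversion error, I would invoke the fact (asserted in the excerpt, following \cite{lu2022dpm,lu2022dpmp}) that $G^{\dagger}$ is a $k_1$-th order numerical scheme for Eq.~\eqref{eq:int_eq_data}: under the regularity conditions (bounded derivatives of $\hat{\bx}_{\btheta}$ in $\lambda$, total variation of the $\lambda$-grid bounded, etc.), a standard local-truncation-error-plus-Grönwall argument over the at most $N$ steps from $t$ to $T$ gives a per-coordinate error of order $h_{\max}^{k_1}$, hence an $\ell_2$ error of order $\sqrt{n}\,h_{\max}^{k_1}$; the partial operator $G^{\dagger}_t$ only uses a sub-collection of the steps, so the same bound applies. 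Similarly, $G$ being a $k_2$-th order sampler for the same ODE gives $\|\Phi_{T\to\epsilon}(\bu)-G(\bu)\|_2 = O(\sqrt{n}\,h_{\max}^{k_2})$ for the relevant input $\bu = G^{\dagger}_t(\bar{\bx}_t)$ — here I should note that this global error bound must hold uniformly over a neighborhood containing $\bu$, which is legitimate because $\bu$ stays within an $O(\sqrt{n}\,h_{\max}^{k_1})$ ball of $\Phi_{t\to T}(\bar{\bx}_t)$, a point on a genuine ODE trajectory through a sample of $q_t$, and the regularity conditions are assumed to hold along such trajectories. Finally I would bound the Lipschitz constant of $\Phi_{T\to\epsilon}$: since the probability-flow ODE has drift with bounded derivatives under the regularity conditions, Grönwall gives a finite, $n$-independent Lipschitz constant for $\Phi_{T\to\epsilon}$, which gets absorbed into the $O(\cdot)$. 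Collecting the two pieces yields $O(\sqrt{n}(h_{\max}^{k_2}+L\,h_{\max}^{k_1}))$, with $L$ from Lemma~\ref{lem:lip_G} appearing as the coefficient on the inversion-error term exactly as claimed (using $\|G(\ba)-\Phi_{T\to\epsilon}(\ba)\| + L\|\ba-\bb\| + \|\Phi_{T\to\epsilon}(\ba)-\Phi_{T\to\epsilon}(\bb)\|$ type estimates, so one could equally route the Lipschitz factor through $G$ rather than $\Phi_{T\to\epsilon}$ — either way the $L h_{\max}^{k_1}$ term arises).

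The main obstacle I anticipate is making the ``$k_1$-th order numerical inversion method'' claim fully rigorous in a way that yields the clean $\sqrt{n}\,h_{\max}^{k_1}$ dependence: the results of \cite{lu2022dpm,lu2022dpmp} are stated for the sampling direction, and while the inversion iteration (e.g.\ Eq.~\eqref{eq:ddim_inv}) is obtained by the same semilinear exponential-integrator discretization run backward in $\lambda$, one must check that the local truncation error still scales as $h_i^{k_1+1}$ and that error accumulation through the composition — now composed in the order $v_1\circ\cdots\circ v_{j_t}$ — remains controlled; this requires the Lipschitz continuity of each $v_i$ in its argument, which in turn follows from Assumption~\ref{assp:lip_cont} by an argument parallel to Lemma~\ref{lem:lip_G} (and is presumably what Lemma~\ref{lem:lip_G_more} in the appendix supplies). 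A secondary technical point is ensuring all the implicit constants — the Lipschitz constant of $\Phi_{T\to\epsilon}$, the constants in the order-$k$ error bounds, and $L$ — are genuinely independent of $n$ so that the only $n$-dependence is the explicit $\sqrt{n}$ coming from converting a per-coordinate $\ell_\infty$-type truncation bound to an $\ell_2$ bound; I would handle this by working coordinatewise wherever the ODE decouples and only invoking $\sqrt{n}$ at the end.
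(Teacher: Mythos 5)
Your proposal is correct and matches the paper's proof in all essentials: both arguments verify that the exact inversion and sampling operators compose to the identity along the ODE trajectory (so $\bar{G}\circ\bar{G}^\dagger_t(\bar{\bx}_t)=\bar{\bx}_\epsilon$), split the error by a triangle inequality through an intermediate composed point, and invoke the order-$k$ global convergence results in the style of Theorem~3.2 of \citet{lu2022dpm} for the inversion and sampling pieces, with the $\sqrt{n}$ arising from converting the coordinatewise bound to an $\ell_2$ bound. The only difference is the choice of intermediate point: the paper uses $G\circ\bar{G}^\dagger_t(\bar{\bx}_t)$ rather than your $\Phi_{T\to\epsilon}\circ G^\dagger_t(\bar{\bx}_t)$ (a routing you yourself note in passing), which makes the stated factor $L$ fall out directly from the Lipschitz continuity of $G$ and lets the sampler's error bound be applied at the exact point $\bar{\bx}_T=\bar{G}^\dagger_t(\bar{\bx}_t)$ on a true ODE trajectory, sidestepping your extra step of justifying that bound at a numerically perturbed input.
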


\section{Experiments}
\label{sec:exp}

In this section, we conduct a series of experiments to validate the effectiveness of the proposed \texttt{SIM-DMIS} approach (see Algorithm~\ref{alg:DMIS}). 
Specifically, we evaluate our method on two datasets: FFHQ 256$\times$256~\cite{karras2019style}, ImageNet 256$\times$256~\cite{deng2009imagenet}.\footnote{Due to the page limit, the results of CIFAR-10~\cite{krizhevsky2009learning} are included in Appendix~\ref{sec:cubic_cifar10}.} For FFHQ and ImageNet, the ambient dimension is $n = 3\times256\times256 = 196608$.

We perform experiments on nonlinear measurement models with 1-bit and cubic measurements, for which we have $\by = \mathrm{sign}(\bA\bx^*+\be)$ and $\by = (\bA\bx^*)^3+\be$ respectively, where $\be \sim \calN(\bm{0},\sigma^2\bI_m)$. We compare \texttt{SIM-DMIS} with \texttt{QCS-SGM} proposed in~\cite{meng2022quantized}, which uses DMs to handle quantized compressed sensing problems. We also compare our approach with the posterior sampling methods \texttt{DPS} \cite{chung2023diffusion} and \texttt{DAPS} \cite{zhang2024improving}. \texttt{DPS} is a popular baseline for general signal recovery problems, and \texttt{DAPS} has shown excellent performance in nonlinear signal recovery problems. When \texttt{DPS} and \texttt{DAPS} utilize the knowledge of the link function $f$,\footnote{For 1-bit measurements where $f$ is not always differentiable, PyTorch can still enforce automatic differentiation.\label{fn_py}} the corresponding methods are denoted as \texttt{DPS-N} and \texttt{DAPS-N} respectively. In addition, we compare with \texttt{DPS} and \texttt{DAPS} for the linear operator without using the knowledge of $f$, and the corresponding methods are denoted as \texttt{DPS-L} and \texttt{DAPS-L} respectively.

For the \texttt{QCS-SGM} method, we adhere to the experimental settings specified in~\cite{meng2022quantized} to ensure a fair comparison. Additionally, we compare with \texttt{SIM-DMFIS} and \texttt{SIM-DMS} (see Eqs.~\eqref{eq:oneshot} and~\eqref{eq:sim-dms}), which are two variants of \texttt{SIM-DMIS}. We employ three metrics to assess the performance of these methods: Peak signal-to-noise ratio (PSNR), structural similarity index measure (SSIM), and learned perceptual image patch similarity (LPIPS). The reported quantitative results are averaged over 100 testing images. In the experimental process, we use DDIM for sampling and the DM2M inversion method (refer to Sec.~\ref{sec:sampling_DMs} for details). The reported NFEs represent the total number of NFEs utilized by each approach. For instance, in the case of \texttt{SIM-DMIS} and \texttt{SIM-DMFIS}, the reported NFE is the sum of the number of steps in the inversion process and the number of steps in the sampling process. We adopt the variance preserving (VP) schedule with $\beta_{\min}=0.1$ and $\beta_{\max}=20$. For \texttt{SIM-DMS}, we use 50 NFEs, while for \texttt{SIM-DMIS} and \texttt{SIM-DMFIS}, we use 150 NFEs.\footnote{The same number of NFEs is used for the inversion process of \texttt{SIM-DMIS} and \texttt{SIM-DMFIS}. As demonstrated in Appendices \ref{sec:DMS_parameter} and \ref{sec:DMIS_parameter}, this choice has a negligible impact on the experimental results.\label{fn_dms}} The remaining parameters are set as $\epsilon = 0.001$, $T = 1$, and $\sigma =0.05$. For the detailed parameter settings of $C_s$ and $C_s'$ in \texttt{SIM-DMS} and \texttt{SIM-DMIS}, please refer to Appendices \ref{sec:DMS_parameter} and \ref{sec:DMIS_parameter}.

\paragraph{Results for FFHQ (256$\times$256)}

Since the experiments for FFHQ are time-consuming, we only report the results for $m=n/8 = 24576$. For the \texttt{QCS-SGM} method, we leverage a pre-trained unconditional Score-SDE model with the variance exploding (VE) schedule\footnote{https://huggingface.co/google/ncsnpp-ffhq-256} for the FFHQ dataset. As for the other methods, we employ the model from \texttt{DPS}, which has been trained on 49,000 FFHQ 256$\times$256 images and validated on the first 1,000 images. 

It is evident from Table~\ref{table:quan_ffhq} and Figure~\ref{fig:qual_ffhq} that \texttt{SIM-DMIS} exhibits superior reconstruction performance with 1-bit measurements. Notably, \texttt{SIM-DMIS} requires only 150 NFEs, in contrast to \texttt{QCS-SGM}, which demands a significantly higher 11,555 NFEs. Additionally, both \texttt{DPS} and \texttt{DAPS} utilize 1,000 NFEs. The results for cubic measurements presented in Table~\ref{tab:results_cubic_ffhq} and Figure~\ref{fig:qual_cubic_ffhq} further demonstrate the effectiveness of our proposed methods. In particular, \texttt{SIM-DMIS} consistently achieves the highest reconstruction quality across all evaluated metrics while requiring only 150 NFEs, demonstrating both the efficiency and robustness of our approach under the cubic measurement setting on the FFHQ dataset.

Note that in 1-bit measurements, even in the noiseless scenario, the link function $f(x)= \mathrm{sign}(x)$ is non-differentiable at $x=0$ (though as mentioned in Footnote \ref{fn_py}, PyTorch can still enforce automatic differentiation). The non-differentiability also poses challenges for \texttt{DPS-N} and \texttt{DAPS-N} as they rely on $f$ in gradient based updates. This can lead to inaccurate gradients and ultimately resulting in subpar performance.\footnote{The supplementary material of DAPS suggests using Metropolis Hasting for non-differentiable forward operators, but it is also mentioned to have inferior performance and low efficiency, with results only in the supplementary material.} For cubic measurements, since we have observed from the results for 1-bit measurements that~\texttt{SIM-DMFIS}, \texttt{DPS-L}, and \texttt{DAPS-L} do not perform well, we do not compare against them. Furthermore, we do not include comparisons with \texttt{DAPS-N} since although the link function for cubic measurements is differentiable, its pronounced non-linearity produces unstable gradient directions that make optimization with \texttt{DAPS-N} unreliable and significantly degrade reconstruction performance.

\begin{table}[htbp]
\centering
\caption{Quantitative results for FFHQ with 1-bit measurements ($m=n/8=24576$). We mark \textbf{bold} the best scores, and \underline{underline} the second best scores.}

\setlength{\tabcolsep}{3pt}
\begin{tabular}{c c c c c}
\toprule
Method & NFE & PSNR ($\uparrow$) & SSIM ($\uparrow$) & LPIPS ($\downarrow$) \\
\midrule
\texttt{QCS-SGM} & 11555 & 12.91$\pm$2.36 & \underline{0.51}$\pm$0.07 & 0.50$\pm$0.08 \\
\texttt{DPS-N} & 1000 & 11.14$\pm$1.46 & 0.37$\pm$0.09 & 0.69$\pm$ 0.05 \\
\texttt{DPS-L} & 1000 & 8.57$\pm$2.05 & 0.22$\pm$0.08 & 0.69$\pm$0.09 \\
\texttt{DAPS-N} & 1000 & 16.59$\pm$0.54 & 0.33$\pm$0.05 & \underline{0.48}$\pm$0.05 \\
\texttt{DAPS-L} & 1000 & 5.63$\pm$0.71 & 0.04$\pm$0.03 & 0.61$\pm$0.03 \\
\texttt{SIM-DMS} & 50 & \underline{17.14}$\pm$2.41 & 0.44$\pm$0.07 & \underline{0.48}$\pm$0.05 \\
\texttt{SIM-DMFIS} & 150 & 8.48$\pm$0.13 & 0.03$\pm$0.00 & 0.90$\pm$0.02 \\
\texttt{SIM-DMIS} & 150 & \textbf{19.87}$\pm$2.77 & \textbf{0.60}$\pm$0.09 & \textbf{0.37}$\pm$0.05 \\
\bottomrule
\end{tabular}
\label{table:quan_ffhq}
\end{table}

\begin{table}[htbp]
    \centering
    \caption{Quantitative results on the FFHQ with cubic measurements ($m=n/8=24576$). We mark \textbf{bold} the best scores, and \underline{underline} the second best scores.}
    \setlength{\tabcolsep}{3pt}
    \begin{tabular}{c c c c c}
        \toprule
        Method & NFE & PSNR ($\uparrow$) & SSIM ($\uparrow$) & LPIPS ($\downarrow$) \\
        \midrule
        \texttt{DPS-N} & 1000 & \underline{15.41}$\pm$1.38 & 0.37$\pm$0.08 & 0.50$\pm$0.07 \\
        \texttt{SIM-DMS} & 50 & 14.44$\pm$2.13 & \underline{0.51}$\pm$0.10 & \underline{0.47}$\pm$0.08 \\
        \texttt{SIM-DMIS} & 150 & \textbf{17.56}$\pm$2.56 & \textbf{0.57}$\pm$0.10 & \textbf{0.40}$\pm$0.07 \\
        \bottomrule
    \end{tabular}
    \label{tab:results_cubic_ffhq}
\end{table}
\begin{figure}[htbp]
    \centering
    \includegraphics [width=1.0\columnwidth]{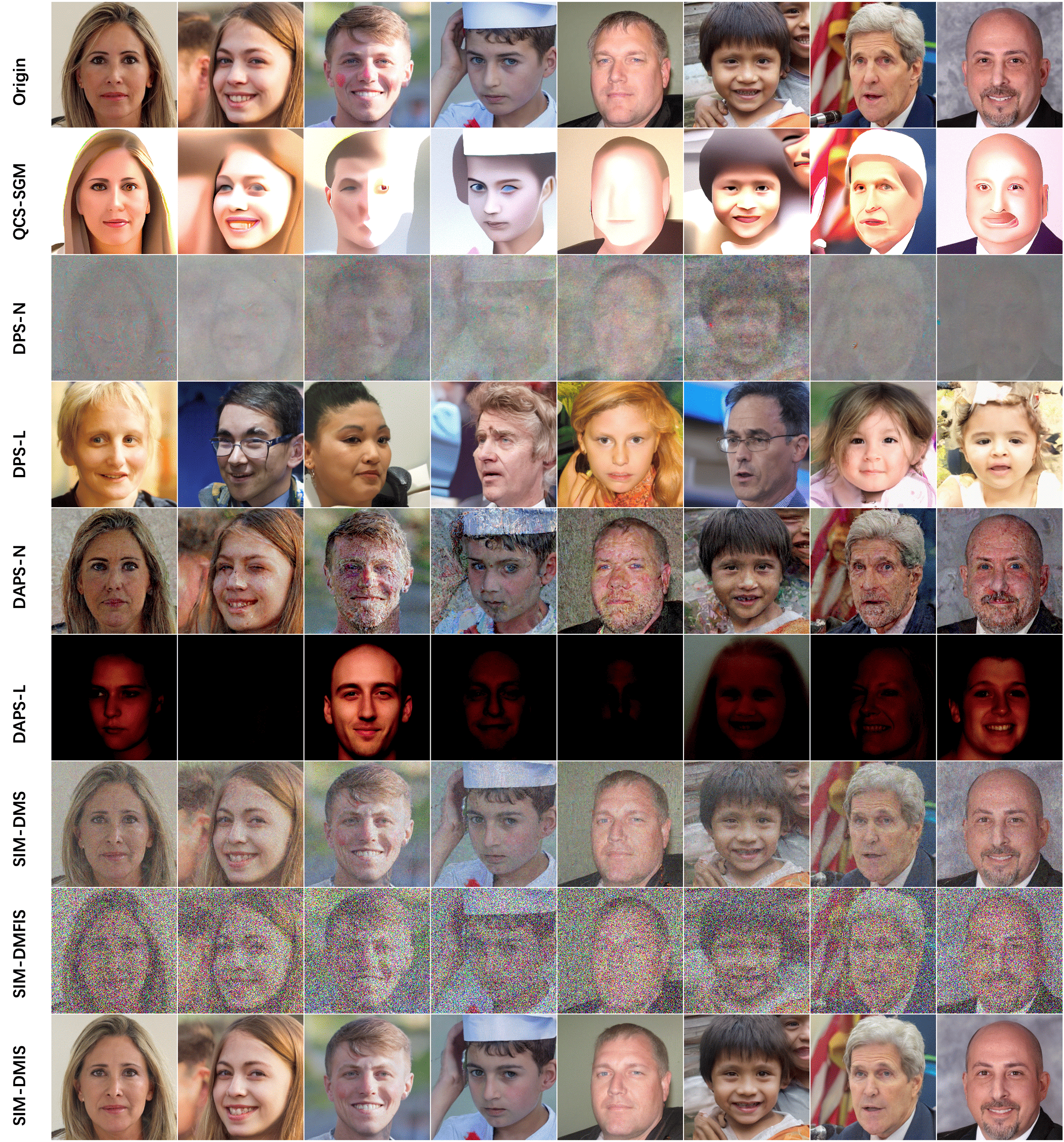}
    \caption{Examples of 1-bit reconstructed images for FFHQ.}
    \label{fig:qual_ffhq}
\end{figure}

\begin{figure}[hbtp]
    \centering
    \includegraphics[width=\columnwidth]{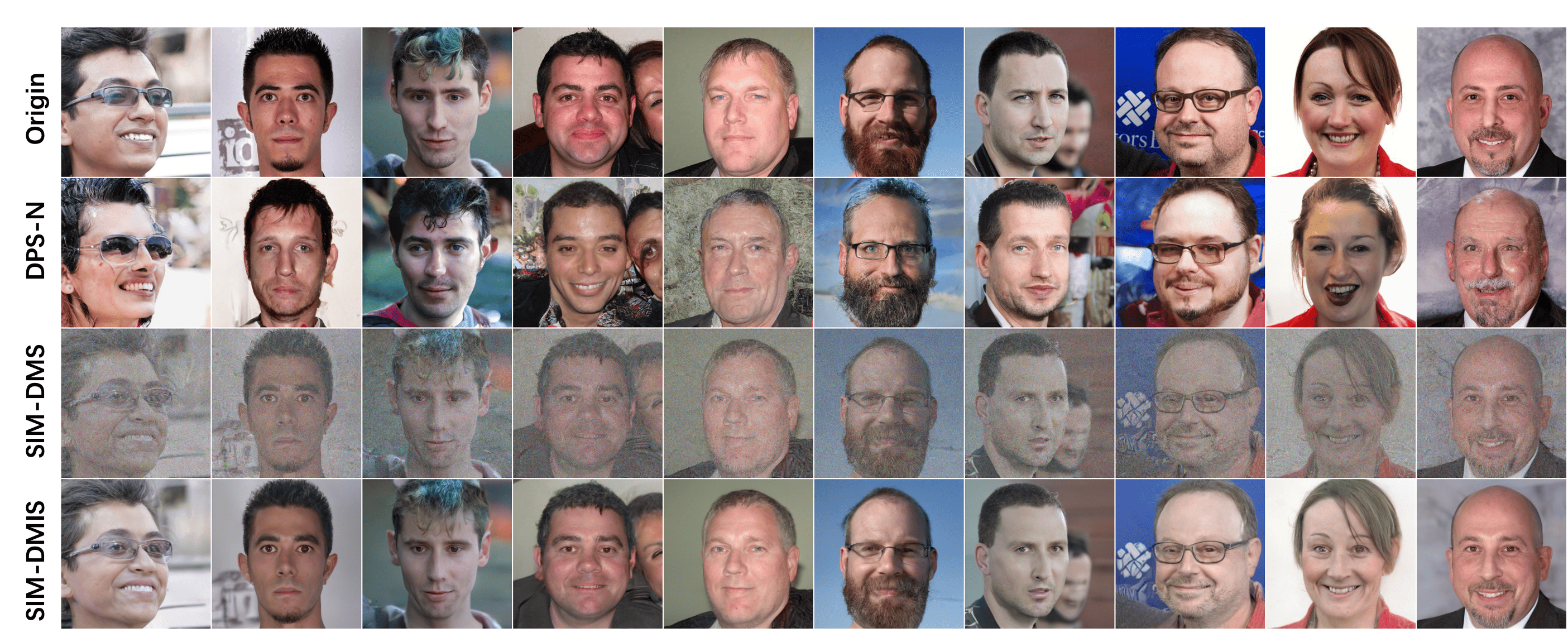}%
    \caption{Examples of cubic reconstructed images for FFHQ.}
    \label{fig:qual_cubic_ffhq}
\end{figure}

\paragraph{Results for ImageNet (256$\times$256)}
For ImageNet, we report the results for $m=n/16 = 12288$. We utilize a pre-trained conditional ImageNet 256$\times$256 model along with its corresponding classifier sourced from ADM~\cite{dhariwal2021diffusion}. We adhere to the default recommended configuration settings of these models, substituting the unconditional ImageNet 256$\times$256 model initially provided by DPS.\footnote{For \texttt{DAPS}, we follow the settings described in the original paper and employ a pre-trained unconditional model from ADM~\cite{dhariwal2021diffusion}.} The tests are conducted on 100 images selected from the Validation1K Set.\footnote{https://github.com/XingangPan/deep-generative-prior/} Given that \texttt{QCS-SGM} lacks pre-trained models for ImageNet, we do not compare with it. 

Table~\ref{table:quan_imagenet} and Figure~\ref{fig:qual_imagenet} demonstrate that \texttt{SIM-DMIS} achieves superior reconstruction with 1-bit measurements compared to \texttt{DPS} and \texttt{DAPS} variants, while requiring only 150 NFEs. The results with cubic measurements, summarized in Table~\ref{tab:results_cubic_imagenet} and Figure~\ref{fig:qual_cubic_imagenet}, further validate the advantages of our methods. Notably, \texttt{SIM-DMIS} achieves the best performance across all metrics while maintaining a low computational cost of only 150 NFEs. These results underscore the effectiveness and efficiency of our approach in handling cubic measurement settings on the ImageNet dataset.
\begin{table}[htbp]
\centering
\caption{Quantitative results for ImageNet with 1-bit measurements ($m=n/16=12288$). We mark \textbf{bold} the best scores, and \underline{underline} the second best scores.}
\setlength{\tabcolsep}{3pt}
\begin{tabular}{c c c c c}
\toprule
Method & NFE & PSNR ($\uparrow$) & SSIM ($\uparrow$) & LPIPS ($\downarrow$) \\
\midrule

\texttt{DPS-N} & 1000 & 11.57$\pm$2.63 & 0.24$\pm$0.11 & 0.64$\pm$0.04 \\
\texttt{DPS-L} & 1000 & 12.67$\pm$3.43 & 0.11$\pm$0.16 & 0.81$\pm$0.12 \\
\texttt{DAPS-N} & 1000 & 14.62$\pm$1.20 & 0.12$\pm$0.01 & 0.61$\pm$0.04 \\
\texttt{DAPS-L} & 1000 & 6.20$\pm$1.66 & 0.08$\pm$0.14 & 0.64$\pm$0.02 \\
\texttt{SIM-DMS} & 50 & \underline{16.93}$\pm$3.16 & \underline{0.32}$\pm$0.09 & \underline{0.54}$\pm$0.09 \\
\texttt{SIM-DMFIS} & 150 & 11.89$\pm$2.24 & 0.08$\pm$0.02 & 0.76$\pm$0.06 \\
\texttt{SIM-DMIS} & 150 & \textbf{18.06}$\pm$3.16 & \textbf{0.45}$\pm$0.13 & \textbf{0.46}$\pm$0.08 \\
\bottomrule
\end{tabular}

\label{table:quan_imagenet}
\end{table}
\begin{table}[htbp]
    \centering
    \caption{Quantitative results on the ImageNet with cubic measurements ($m=n/16=12288$). We mark \textbf{bold} the best scores, and \underline{underline} the second best scores.}
    \setlength{\tabcolsep}{3pt}
    \begin{tabular}{c c c c c}
        \toprule
        Method & NFE & PSNR ($\uparrow$) & SSIM ($\uparrow$) & LPIPS ($\downarrow$) \\
        \midrule
        \texttt{DPS-N} & 1000 & 11.47$\pm$2.97 & 0.22$\pm$0.08 & 0.62$\pm$0.06 \\
        \texttt{SIM-DMS} & 50 & \underline{14.99}$\pm$2.82 & \underline{0.33}$\pm$0.06 & \underline{0.59}$\pm$0.08 \\
        \texttt{SIM-DMIS} & 150 & \textbf{16.28}$\pm$3.38 & \textbf{0.40}$\pm$0.15 & \textbf{0.55}$\pm$0.10 \\
        \bottomrule
    \end{tabular}
    \label{tab:results_cubic_imagenet}
\end{table}

\begin{figure}[htbp]
    \centering
    \includegraphics [width=1.0\columnwidth]{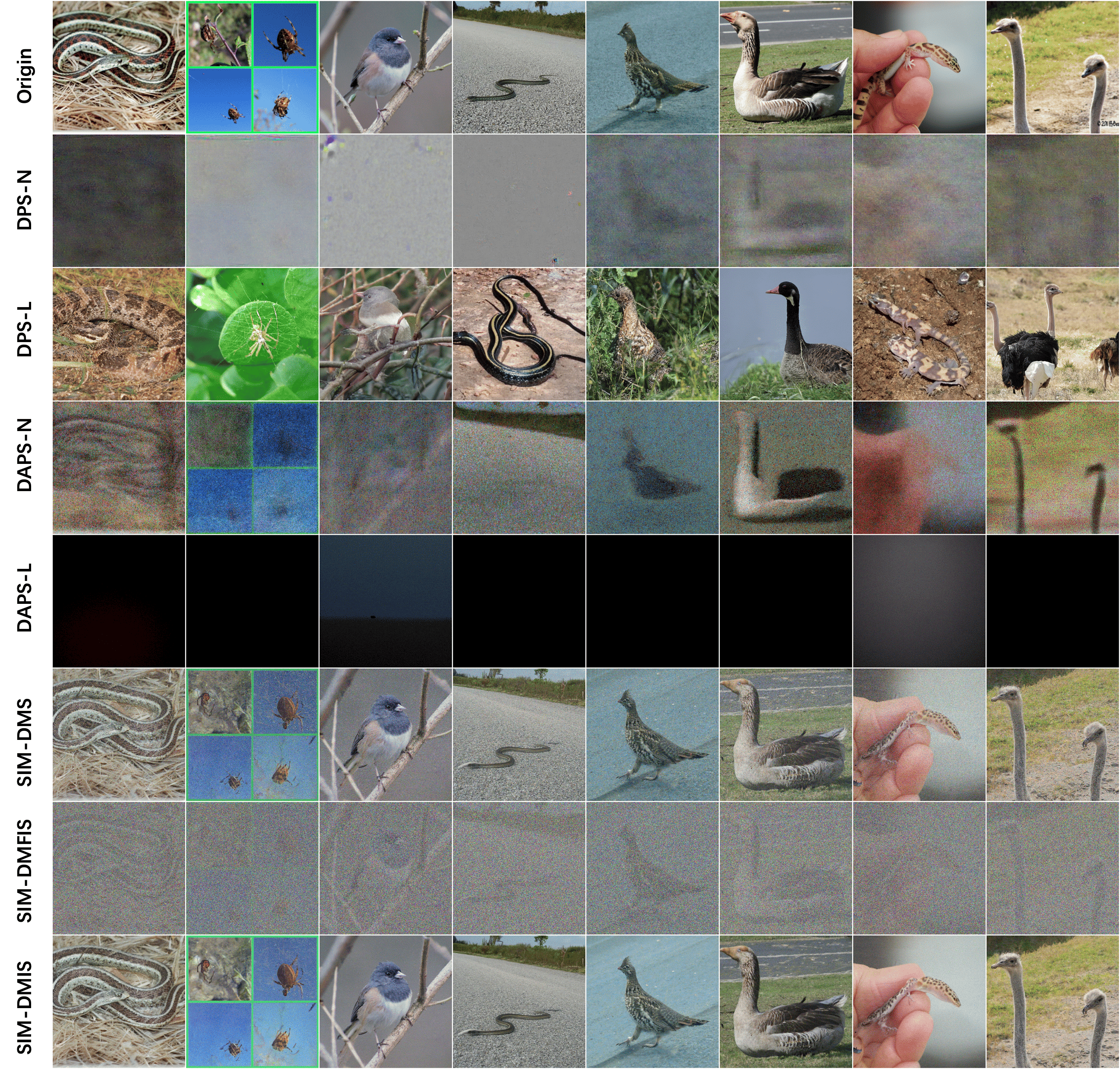}
    \caption{Examples of 1-bit reconstructed images for ImageNet.}
    \label{fig:qual_imagenet}
\end{figure}

\begin{figure}[hbtp]
    \centering
    \includegraphics[width=1.0\columnwidth]{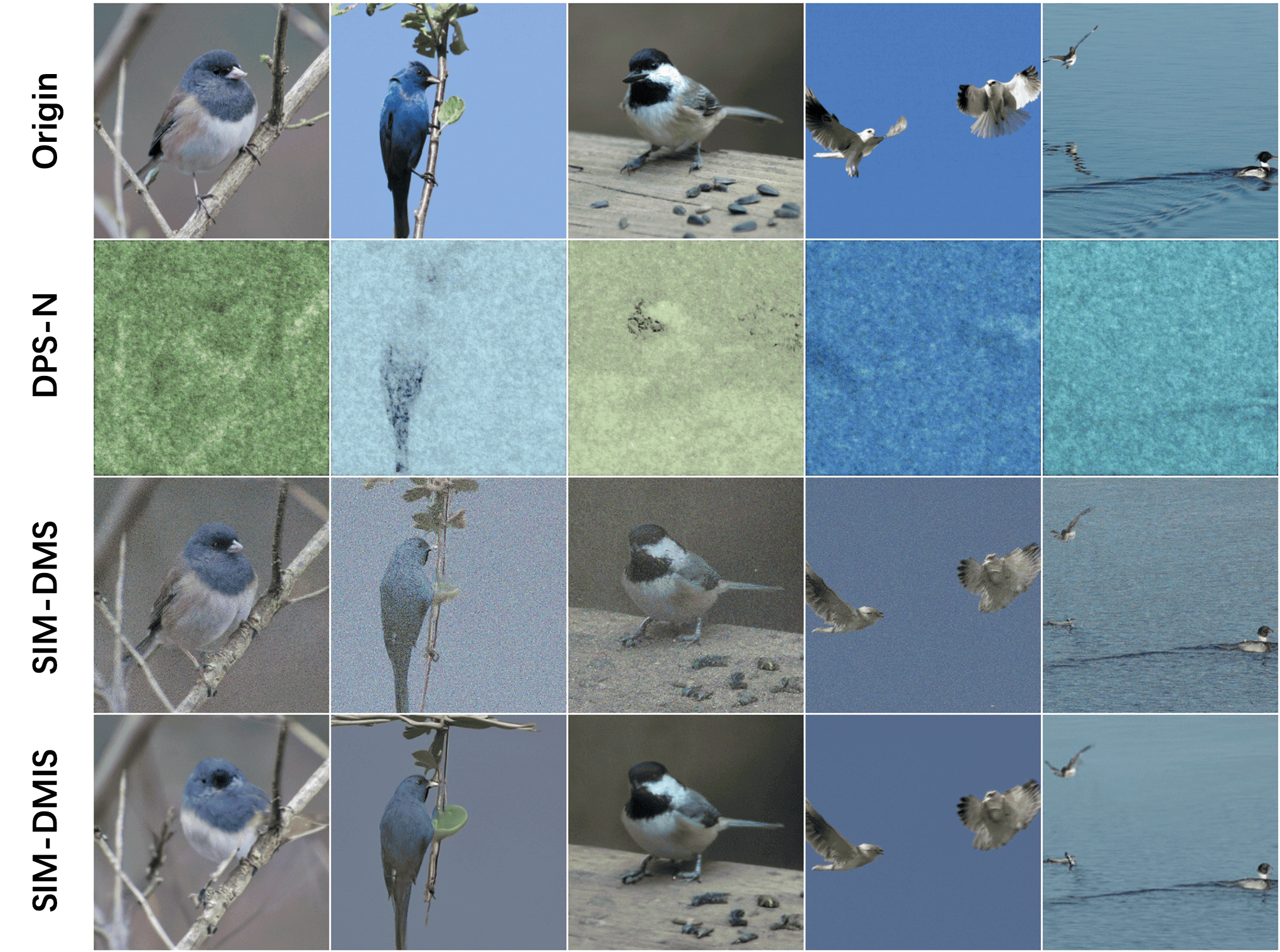}%
    \caption{Examples of cubic reconstructed images for ImageNet.}
    %\textcolor{red}{[Chenyouming: The title of the Figure, DPS-L, will be changed to DPS-N. I will fix it in May 28.]}
    \label{fig:qual_cubic_imagenet}
\end{figure}

\section{Conclusion}
\label{sec:concl}

In this paper, we present approaches for learning SIMs by making use of the sampling and inversion methods for pre-trained unconditional DMs. Theoretical analysis and numerical results are provided to illustrate the effectiveness of the proposed approaches.

\section*{Impact Statement}

This paper presents work aimed at advancing the field of Machine Learning, particularly in the area of signal recovery using diffusion models (DMs). The proposed method applies DMs to enhance the recovery of signals from Single Index Models (SIMs), offering more flexible and robust solutions for complex signal reconstruction tasks. It could have broad applications in fields such as imaging, signal processing, and data reconstruction, with potential to improve the efficiency and accuracy of various real-world tasks. While no direct ethical concerns are associated with the methodology, the potential societal impacts of this work include improvements in computational efficiency and the enhancement of technologies reliant on image reconstruction and signal recovery. 

\bibliography{main.bib}
\bibliographystyle{icml2025}

%%%%%%%%%%%%%%%%%%%%%%%%%%%%%%%%%%%%%%%%%%%%%%%%%%%%%%%%%%%%%%%%%%%%%%%%%%%%%%%
%%%%%%%%%%%%%%%%%%%%%%%%%%%%%%%%%%%%%%%%%%%%%%%%%%%%%%%%%%%%%%%%%%%%%%%%%%%%%%%
% APPENDIX
%%%%%%%%%%%%%%%%%%%%%%%%%%%%%%%%%%%%%%%%%%%%%%%%%%%%%%%%%%%%%%%%%%%%%%%%%%%%%%%
%%%%%%%%%%%%%%%%%%%%%%%%%%%%%%%%%%%%%%%%%%%%%%%%%%%%%%%%%%%%%%%%%%%%%%%%%%%%%%%
\newpage
\appendix
\onecolumn

\clearpage
\onecolumn
\setcounter{page}{1}

\section{Extended Related Work}
\label{app:related_work}

In this subsection, we summarize some relevant works, which can roughly be divided into (i) nonlinear signal recovery with conventional generative models, and (ii) signal recovery with diffusion models.

\paragraph{Nonlinear signal recovery with conventional generative models:} The seminal work~\cite{bora2017compressed} studies linear compressed sensing with conventional generative priors (such as GANs and VAEs) and demonstrates via numerical experiments on image datasets that using a pre-trained generative prior can significantly reduce the number of required measurements (compared to that of using the sparse prior) for accurate signal recovery. This has led to a significant volume of follow-up works in~\cite{dhar2018modeling, hand2018phase, shah2018solving, jagatap2019algorithmic, asim2020invertible, ongie2020deep, daras2021intermediate, liu2020information, jalal2021instance, jalal2021robust}, which explore various aspects of high-dimensional signal recovery with generative priors. A literature review in this area can be found in~\cite{scarlett2022theoretical}.

In particular, under conventional generative priors, there have been several studies on 1-bit compressed sensing~\cite{qiu2020robust, liu2020sample, liu2021robust} and phase retrieval~\cite{hand2018phase, hyder2019alternating, jagatap2019algorithmic, aubin2020exact, shamshad2020compressed, liu2021towards, killedar2022compressive,liu2022misspecified}, among others. Moreover, SIMs that account for unknown nonlinearity and conventional generative priors have been explored in several studies, including~\cite{wei2019statistical, liu2020generalized, liu2022non, chen2023unified}. More specifically, the works~\cite{liu2020generalized, liu2022non, chen2023unified} provide recovery guarantees for the generalized Lasso approach and practical algorithms under the assumption of Gaussian sensing vectors. In~\cite{wei2019statistical}, under the extra assumption that the link function is differentiable, SIMs with link functions of first- and second-orders are studied through Stein's identity, allowing for general non-Gaussian sensing vectors.

\paragraph{Signal recovery with diffusion models:} Due to the ability to model complex distributions, DMs have demonstrated remarkable performance in a variety of applications. The seminal work on DMs \cite{sohl2015deep} introduces a framework for the diffusion process, which has been further developed in multiple forms, such as denoising diffusion probabilistic models (DDPMs) \cite{ho2020denoising} and score-based generative models (SGMs) \cite{song2021score}. These models have been widely applied to signal recovery problems~\cite{daras2024survey}, which can be broadly classified into two main paradigms.

One paradigm is task-specific, in which DMs are trained for particular tasks. For example, the model in \cite{saharia2022image} is customized for super-resolution, while the model in \cite{ren2023multiscale} is developed for deblurring. The other paradigm makes use of pre-trained DMs that are not limited to any specific signal recovery problem. Works within this paradigm can be further classified according to how they handle the measurements. Some approaches use DMs as priors to direct the reconstruction process by estimating the score function, as shown in \cite{feng2023score,daras2022score}. Other approaches, such as DDRM \cite{kawar2022denoising}, adapt the DDPM framework specifically for linear signal recovery problems. By incorporating singular value decomposition to limit the solution space, DDRM can effectively handle linear measurements. However, its performance degrades when dealing with sparse or limited measurements, rendering it less effective in challenging situations.

To solve general noisy linear signal recovery problems, some recent works \cite{chung2022improving, chung2023diffusion, song2023pseudoinverse, fei2023generative, fabian2023diracdiffusion, zhang2023towards} aim to estimate the posterior distribution using the unconditional diffusion model based on Bayes' rule. In particular, MCG~\cite{chung2022improving} approaches data consistency from the perspective of the data manifold. It proposes a manifold-constrained gradient to ensure that corrections stay on the data manifold. DPS~\cite{chung2023diffusion} abandons the projection step in the reverse process because it may cause the sampling path to deviate from the data manifold. Instead, it approximates the gradient of the posterior through a specially hand-designed strategy, where the measurements can be regarded as a signal conditioning the sampling process. $\Pi$GDM~\cite{song2023pseudoinverse} further develops this type of strategy into a unified expression that encompasses various signal recovery problems by using the Moore-Penrose pseudoinverse. $\Pi$GDM guides the diffusion process by matching the one-step denoising solution and the ground-truth measurements, after transforming both via a generalized pseudoinverse of the measurement model. DAPS ~\cite{zhang2024improving} modifies the traditional diffusion process by decoupling consecutive steps in the sampling trajectory. This allows for greater variation between steps, improving the exploration of the solution space and leading to more accurate and stable reconstructions.
\section{Proofs of Lemmas~\ref{lem:gaussianVec} and~\ref{lem:noisy_vec_level}}
\label{app:proof_lemmaNoisyVec}

Before presenting the proofs, we present the following simple tail bound for a Gaussian random variable.

\begin{lemma} {\em (Gaussian tail bound \citep[Example~2.1]{wainwright2019high})}
\label{lem:large_dev_Gaussian} 
Suppose that $X \sim \calN(\alpha,\sigma^2)$ is a Gaussian random variable with mean $\alpha$ and variance $\sigma^2$. Then, for any $u >0$, $\bbP(|X-\alpha|\ge u) \le 2e^{-\frac{u^2}{2\sigma^2}}$.
\end{lemma}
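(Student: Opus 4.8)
The plan is to reduce to the standard normal case and then apply a Chernoff-type exponential Markov bound. First I would set $Z := (X-\alpha)/\sigma$, so that $Z \sim \calN(0,1)$; then $\bbP(|X-\alpha|\ge u) = \bbP(|Z| \ge u/\sigma)$, and by the symmetry of the standard normal density this equals $2\,\bbP(Z \ge u/\sigma)$. Hence it suffices to establish the one-sided tail bound $\bbP(Z \ge t) \le e^{-t^2/2}$ for every $t>0$, and then substitute $t = u/\sigma$.

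For the one-sided bound I would use the moment generating function of the standard normal, namely $\bbE\big[e^{\lambda Z}\big] = e^{\lambda^2/2}$ for all $\lambda \in \bbR$ (obtained by completing the square inside the Gaussian integral). Combining this with Markov's inequality applied to the nonnegative random variable $e^{\lambda Z}$ gives, for any $\lambda > 0$,
\[
\bbP(Z \ge t) = \bbP\big(e^{\lambda Z} \ge e^{\lambda t}\big) \le e^{-\lambda t}\,\bbE\big[e^{\lambda Z}\big] = e^{\lambda^2/2 - \lambda t}.
\]
Minimizing the exponent $\lambda^2/2 - \lambda t$ over $\lambda > 0$ gives the minimizer $\lambda = t$ and the value $-t^2/2$, so $\bbP(Z \ge t) \le e^{-t^2/2}$. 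Plugging in $t = u/\sigma$ and restoring the factor $2$ from the symmetrization step yields $\bbP(|X-\alpha| \ge u) \le 2\,e^{-u^2/(2\sigma^2)}$, as required.

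There is no real obstacle here: the only ingredient is the Gaussian MGF identity, and the optimization over $\lambda$ is a one-line calculus step. If one preferred to avoid the MGF altogether, an alternative route is to bound the Gaussian tail integral directly, using $\int_t^\infty e^{-s^2/2}\,\mathrm{d}s \le \int_t^\infty (s/t)\,e^{-s^2/2}\,\mathrm{d}s = e^{-t^2/2}/t$ for $t \ge 1$, together with the trivial bound $\bbP(Z \ge t) \le \tfrac12 \le e^{-t^2/2}$ for $t \le \sqrt{2\ln 2}$; since these two regimes overlap they cover all $t>0$. The Chernoff argument above is cleaner, so that is the one I would present.
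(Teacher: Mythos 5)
Your Chernoff/MGF argument is correct and is exactly the standard derivation behind the cited result (Wainwright, Example~2.1); the paper itself quotes this lemma without proof, so there is no in-paper argument to diverge from. The only nit is in your unused alternative: the tail integral should carry the $1/\sqrt{2\pi}$ normalization, though since that factor is less than $1$ the conclusion survives.
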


\subsection{Proof of Lemma~\ref{lem:gaussianVec}}

 For $u > 0$ and any  $j \in [n]$, from Lemma~\ref{lem:large_dev_Gaussian}, from Lemma~\ref{lem:large_dev_Gaussian}, we have with probability at least $1-2 e^{-\frac{u^2}{2}}$ that $|\epsilon_j| \le u$. Taking a union bound over all $j \in [n]$, we obtain with probability at least $1-2n e^{-\frac{u^2}{2}}$ that $\|\bepsilon\|_\infty \le u$. Then, by setting $u = C\sqrt{\log(2n)}$, we obtain the desired result.

 \subsection{Proof of Lemma~\ref{lem:noisy_vec_level}}

 Let $g_i = \ba_i^T\bx^*$. Since $\bx^* \in \calS^{n-1}$, we know that $g_i$ are independent standard normal random variables. Additionally, note that
    \begin{align}
        &\frac{1}{m}\bA^T\by - \mu \bx^* = \frac{1}{m}\sum_{i=1}^m y_i \ba_i - \mu \bx^* \\
        & = \frac{1}{m}\sum_{i=1}^m y_i (\ba_i- (\ba_i^T\bx^*)\bx^*) +  \frac{1}{m}\sum_{i=1}^m y_i(\ba_i^T\bx^*)\bx^* - \mu \bx^*. \label{eq:sum_decomp}
    \end{align}
    For any $j \in [n]$, since $\bbE[a_{ij}g_i] = x^*_j$, $a_{ij}$ can be written as
    \begin{equation}
        a_{ij} = x^*_j g_i + \sqrt{1-( x^*_j)^2}r_{ij},
    \end{equation}
    where $r_{ij}$ is a standard normal random variable that is independent of $g_i$. Then, the $j$-th entry of the first term in the right-hand side of Eq.~\eqref{eq:sum_decomp} can be written as
    \begin{align}
        & \left[\frac{1}{m}\sum_{i=1}^m y_i (\ba_i- (\ba_i^T\bx^*)\bx^*)\right]_j  = \frac{1}{m}\sum_{i=1}^m y_i (a_{ij}- g_i x^*_j) \\
        & = \sqrt{1-( x^*_j)^2} \cdot \frac{1}{m}\sum_{i=1}^m y_i r_{ij}.\label{eq:sum_Gaussians}
    \end{align}
    Note that $y_i = f(\ba_i^T\bx^*) = f(g_i)$. Thus $r_{ij}$ is also independent of $y_i$. Let $M_2 := \bbE_{g\sim\calN(0,1)}[f(g)^2]$ and $\calE_1$ be the event that $\frac{1}{m}\sum_{i=1}^m y_i^2 \le 2M_2$. 
     From~\citep[Lemma 3]{liu2022non}, we have 
     \begin{equation}\label{eq:calE1_term}
         \bbP(\calE_1^c) \le \frac{C_f}{m},
     \end{equation}
     where $C_f$ is a positive constant depending on $f$. Moreover, conditioned on the event $\calE_1$, the term $\sum_{i=1}^m y_i r_{ij}$ in Eq.~\eqref{eq:sum_Gaussians} is zero-mean Gaussian with the variance being $\sum_{i=1}^m y_i^2$. Then, from Lemma~\ref{lem:large_dev_Gaussian}, we obtain
     \begin{equation}
         \bbP\left(\left|\sum_{i=1}^m y_i r_{ij}\right| > u\right) \le 2\exp\left(-\frac{u^2}{2\sum_{i=1}^m y_i^2}\right) \le 2\exp\left(-\frac{u^2}{4m M_2}\right).
     \end{equation}
     Taking a union bound over all $j \in [n]$, we obtain with probability at least $1-2n\exp\big(-\frac{u^2}{4m M_2}\big)$ that
     \begin{equation}
         \left\|\frac{1}{m}\sum_{i=1}^m y_i (\ba_i- (\ba_i^T\bx^*)\bx^*)\right\|_\infty = \left\|\sqrt{1-( x^*_j)^2} \cdot \frac{1}{m}\sum_{i=1}^m y_i r_{ij}\right\|_\infty \le \frac{u}{m}.\label{eq:PuT_term}
     \end{equation}
     Furthermore, the $j$-th entry of the second term in the right-hand side of Eq.~\eqref{eq:sum_decomp} can be written as
     \begin{align}
         \frac{1}{m}\sum_{i=1}^m y_i(\ba_i^T\bx^*)\bx^* - \mu \bx^* =  \left(\frac{1}{m} \sum_{i=1}^my_i(\ba_i^T\bx^*) - \mu\right) \bx^*. 
     \end{align}
     For any $\tau>0$, let $\calE_2$ be the event that $\big|\frac{1}{m} \sum_{i=1}^my_i(\ba_i^T\bx^*) - \mu\big| <  \tau$. From~\citep[Lemma 3]{liu2022non}, we have 
     \begin{equation}\label{eq:calE2_term}
         \bbP(\calE_2^c) \le \frac{C_f'}{m\tau^2},
     \end{equation}
     where $C_f'$ is a positive constant depending on $f$. Additionally, conditioned on $\calE_2$, we have 
     \begin{equation}
         \left\|\left(\frac{1}{m} \sum_{i=1}^my_i(\ba_i^T\bx^*) - \mu\right) \bx^*\right\|_\infty = \left|\frac{1}{m} \sum_{i=1}^my_i(\ba_i^T\bx^*) - \mu\right|\cdot \|\bx^*\|_\infty \le \tau. \label{eq:eps_term}
     \end{equation}
     Combining Eqs.~\eqref{eq:sum_decomp},~\eqref{eq:calE1_term},~\eqref{eq:PuT_term},~\eqref{eq:calE2_term}, and~\eqref{eq:eps_term}, we observe that by setting $u = C_1 \sqrt{\log (2n)}\cdot \sqrt{m}$ and $\tau = C_2 \sqrt{\log (2n)}/\sqrt{m}$ for some sufficiently large positive constants $C_1$ and $C_2$ and letting $C'=C_1+C_2$, we obtain the desired result. 

\section{Proof of Theorem~\ref{thm:main_new}}
\label{app:proof_thm_new}

First, we provide a proof for the auxiliary result in Lemma~\ref{lem:lip_G} for popular sampling methods such as DDIM and DM2M ({\em cf.} Sec.~\ref{sec:sampling_DMs}). Specifically, we have the following detailed version of Lemma~\ref{lem:lip_G}.
\begin{lemma}\label{lem:lip_G_more}
    Suppose that the data prediction network $\bm{x}_{\btheta}$ satisfies Assumption~\ref{assp:lip_cont}. Then, if $G\,:\, \bbR^n \to \bbR^n$ is the generator corresponding to the entire sampling process of DDIM (see Eq.~\eqref{eq:DDIM}), we have that $G$ is $L$-Lipschitz continuous with 
    \begin{equation}\label{eq:ddim_L}
        L = \prod_{i=1}^N \left(\frac{\sigma_{t_i}}{\sigma_{t_{i-1}}}+\sigma_{t_i}\big(\frac{\alpha_{t_i}}{\sigma_{t_i}}-\frac{\alpha_{t_{i-1}}}{\sigma_{t_{i-1}}}\big)\cdot L_{t_{i-1}}\right).
    \end{equation}
    Additionally, suppose that $G\,:\, \bbR^n \to \bbR^n$ is the generator corresponding to the entire sampling process of DM2M (see Eq.~\eqref{eq:dpm2m++}), and suppose that for $i =0,1,2,\ldots, N$, $\tilde{\bm{x}}_{t_i}$ is $\tilde{L}_i$-Lipschitz continous with respect to $\tilde{\bm{x}}_{t_0}$. Then, we have that $\tilde{L}_0$ can be set to $1$ and $\tilde{L}_1$ can be set to be $\frac{\sigma_{t_1}}{\sigma_{t_{0}}}+\sigma_{t_1}\big(\frac{\alpha_{t_1}}{\sigma_{t_1}}-\frac{\alpha_{t_{0}}}{\sigma_{t_{0}}}\big)\cdot L_{t_{0}}$, and for $i \ge 2$, $\tilde{L}_i$ can be calculated via the following recursive formula:
    \begin{equation}
        \tilde{L}_{i} = \left(\frac{\sigma_{t_i}}{\sigma_{t_{i-1}}}+\alpha_{t_i} \big(1-e^{-h_i}\big) \cdot \big(1+\frac{1}{2r_i}\big)L_{t_{i-1}}\right)\tilde{L}_{i-1}+\alpha_{t_i} \big(1-e^{-h_i}\big) \cdot \frac{1}{2r_i}L_{t_{i-2}} \tilde{L}_{i-2}.
    \end{equation}
    In particular, we have that $G$ is $L$-Lipschitz continuous with $L = \tilde{L}_N$.
\end{lemma}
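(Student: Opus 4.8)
The plan is to bound the Lipschitz constants step by step and then compose. For DDIM, recall from Eq.~\eqref{eq:gi_ddim} that the $i$-th sampling map is $\kappa_i(\bx)=\frac{\sigma_{t_i}}{\sigma_{t_{i-1}}}\bx+\sigma_{t_i}\big(\frac{\alpha_{t_i}}{\sigma_{t_i}}-\frac{\alpha_{t_{i-1}}}{\sigma_{t_{i-1}}}\big)\bx_{\btheta}(\bx,t_{i-1})$, the sum of a linear map of operator norm $\sigma_{t_i}/\sigma_{t_{i-1}}$ and a scalar multiple of the Lipschitz map $\bx_{\btheta}(\cdot,t_{i-1})$. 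First I would note that the scalar $\sigma_{t_i}\big(\frac{\alpha_{t_i}}{\sigma_{t_i}}-\frac{\alpha_{t_{i-1}}}{\sigma_{t_{i-1}}}\big)$ is nonnegative: since $\lambda_t=\log(\alpha_t/\sigma_t)$ is strictly decreasing and $t_i<t_{i-1}$, we have $\alpha_{t_i}/\sigma_{t_i}\ge\alpha_{t_{i-1}}/\sigma_{t_{i-1}}$, so no absolute value appears and the stated constant is exactly what the triangle-inequality bound gives. Combining the triangle inequality with Assumption~\ref{assp:lip_cont} then yields $\|\kappa_i(\bx_1)-\kappa_i(\bx_2)\|_2\le L^{(i)}\|\bx_1-\bx_2\|_2$ with $L^{(i)}:=\frac{\sigma_{t_i}}{\sigma_{t_{i-1}}}+\sigma_{t_i}\big(\frac{\alpha_{t_i}}{\sigma_{t_i}}-\frac{\alpha_{t_{i-1}}}{\sigma_{t_{i-1}}}\big)L_{t_{i-1}}$. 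Since $G=\kappa_N\circ\cdots\circ\kappa_1$ by Eq.~\eqref{eq:dm2m_G} and the Lipschitz constant of a composition is at most the product of the individual ones, we get $L=\prod_{i=1}^N L^{(i)}$, which is Eq.~\eqref{eq:ddim_L}.

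For DM2M the map has two-step memory, so rather than bounding each step in isolation I would track, by (strong) induction on $i$, a Lipschitz bound $\tilde L_i$ for the full map $\tilde\bx_{t_0}\mapsto\tilde\bx_{t_i}$. Fix two initializations and write $\delta_j$ for the norm of the difference of the resulting iterates at time $t_j$; the goal is $\delta_j\le\tilde L_j\delta_0$ for all $j$. For the base cases, $\delta_0=\tilde L_0\delta_0$ with $\tilde L_0=1$, and since the first step ($i=1$) is a DDIM step, the DDIM computation above gives $\tilde L_1=\frac{\sigma_{t_1}}{\sigma_{t_0}}+\sigma_{t_1}\big(\frac{\alpha_{t_1}}{\sigma_{t_1}}-\frac{\alpha_{t_0}}{\sigma_{t_0}}\big)L_{t_0}$. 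For $i\ge2$, I would apply the triangle inequality to Eq.~\eqref{eq:dpm2m++} after rewriting $-\alpha_{t_i}(e^{-h_i}-1)=\alpha_{t_i}(1-e^{-h_i})$ and noting that $h_i=\lambda_{t_i}-\lambda_{t_{i-1}}>0$ (hence $1-e^{-h_i}>0$) and $r_i=h_{i-1}/h_i>0$, so every coefficient multiplying an $\bx_{\btheta}$ term is nonnegative; applying Assumption~\ref{assp:lip_cont} to $\bx_{\btheta}(\cdot,t_{i-1})$ and $\bx_{\btheta}(\cdot,t_{i-2})$ gives
\begin{equation}
\delta_i\le\Big(\tfrac{\sigma_{t_i}}{\sigma_{t_{i-1}}}+\alpha_{t_i}(1-e^{-h_i})\big(1+\tfrac{1}{2r_i}\big)L_{t_{i-1}}\Big)\delta_{i-1}+\alpha_{t_i}(1-e^{-h_i})\tfrac{1}{2r_i}L_{t_{i-2}}\,\delta_{i-2}.
\end{equation}
Substituting the induction hypotheses $\delta_{i-1}\le\tilde L_{i-1}\delta_0$ and $\delta_{i-2}\le\tilde L_{i-2}\delta_0$ then produces exactly the claimed recursion for $\tilde L_i$, and taking $i=N$ shows the full DM2M map $G$ is $\tilde L_N$-Lipschitz.

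The argument is essentially mechanical; the only points that need care are (i) checking the signs of all step coefficients (via monotonicity of $\lambda_t$ and positivity of $h_i$ and $r_i$) so that the triangle-inequality bounds are tight, no spurious absolute values appear, and the output matches the stated formulas verbatim; and (ii) handling the two-step memory of DM2M — because $\tilde\bx_{t_i}$ depends on both $\tilde\bx_{t_{i-1}}$ and $\tilde\bx_{t_{i-2}}$, its Lipschitz constant is not a product but must be propagated via a two-term linear recursion, with the $i=1$ step handled separately as a first-order (DDIM) step. I expect (ii) to be the main, though modest, obstacle. The same template — each step is an affine combination of the last few iterates plus bounded-Lipschitz $\bx_{\btheta}$ evaluations, giving a fixed-depth linear recursion for the per-step constants — extends to the higher-order samplers mentioned in Section~\ref{sec:sampling_DMs}.
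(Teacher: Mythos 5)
Your proof is correct and follows essentially the same route as the paper's: a per-step triangle-inequality bound combined with the product rule for composing Lipschitz maps in the DDIM case, and the explicit two-term linear recursion (which the paper only gestures at with ``similarly to the case of DDIM'') for DM2M, with the $i=1$ step handled as a DDIM step. One small wording quibble that does not affect your bound: the coefficient of $\bm{x}_{\btheta}(\tilde{\bm{x}}_{t_{i-2}},t_{i-2})$ in Eq.~\eqref{eq:dpm2m++} is in fact negative, but since the triangle inequality only uses its absolute value, your recursion comes out exactly as stated.
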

\begin{proof}
     For DDIM, by the triangle inequality, we have the following inequality for any $i \in [N]$ and any $\bx_1,\bx_2\in \bbR^n$:
    \begin{equation}
        \|\kappa_i(\bm{x}_1) - \kappa_i(\bm{x}_2)\|_2 \le  \left(\frac{\sigma_{t_i}}{\sigma_{t_{i-1}}}+\sigma_{t_i}\left(\frac{\alpha_{t_i}}{\sigma_{t_i}}-\frac{\alpha_{t_{i-1}}}{\sigma_{t_{i-1}}}\right)\cdot L_{t_{i-1}}\right) \cdot \|\bm{x}_1-\bm{x}_2\|_2,
    \end{equation}
    where $\kappa_i$ is defined in Eq.~\eqref{eq:gi_ddim}. Additionally, it is easy to show that if function $f$ is $L_f$-Lipschitz and $g$ is $L_g$-Lipschitz, then their composition $f \circ g$ is $(L_f L_g)$-Lipschitz~\citep{bora2017compressed}. Then, we obtain that the generator $G=\kappa_N\circ \cdots \circ \kappa_2 \circ \kappa_1$ (see Eq.~\eqref{eq:dm2m_G}) is $L$-Lipschitz continuous, with $L$ given by Eq. \eqref{eq:ddim_L}. 
    Note that for the first sampling step of DM2M, we need to use the first-order numerical scheme—DDIM. Then, similarly to the case of DDIM, based on Eq.~\eqref{eq:dpm2m++}, we can obtain the desired recursive formula for the Lipschitz constants.
\end{proof}

Then, we list the regularity conditions on $\bx_{\btheta}$ required for Theorem~\ref{thm:main_new} as follows. These conditions are similar to those in~\citep[Appendix~B.1]{lu2022dpm}. 

\begin{itemize}
    \item The total derivatives $\frac{\rmd^j \hat{\bx}_{\btheta}(\hat{\bx}_\lambda,\lambda)}{\rmd \lambda^j}$ (as a function of $\lambda$) exist and are continuous for $j = 0, 1, 2, 3$. 
    \item The neural function $\bm{x}_{\btheta}(\bm{x},t)$ is Lipschitz continuous with w.r.t. its first parameter $\bm{x}$ (i.e., Assumption~\ref{assp:lip_cont}).
    \item $h_{\max} = \max_{i \in [N]} (\lambda_{t_{i}}-\lambda_{t_{i-1}}) = O(1/N)$. 
\end{itemize}

Based on the above auxiliary results, we are now ready to present the proof of Theorem~\ref{thm:main_new}.

\subsection{Proof of Theorem~\ref{thm:main_new}}
Let $\bar{G}^\dagger_t$ be the analytic inversion operator (corresponding to Eq.~\eqref{eq:int_eq_data}) from time $t$ to $T$. That is, for any $\bx \in \bbR^n$, we have 
\begin{equation}
    \bar{G}^\dagger_t(\bx) = \frac{\sigma_T}{\sigma_t} \bx + \sigma_T \int_{\lambda_t}^{\lambda_T} e^{\lambda} \hat{\bm{x}}_{\btheta}(\hat{\bx}_\lambda,\lambda)\rmd\lambda.
\end{equation}
Additionally, let $\bar{G}$ be the analytic sampling operator (corresponding to Eq.~\eqref{eq:int_eq_data}) from time $T$ to $\epsilon$. That is, for any $\bx \in \bbR^n$, we have 
\begin{equation}
    \bar{G}(\bx) = \frac{\sigma_\epsilon}{\sigma_T} \bx + \sigma_\epsilon \int_{\lambda_T}^{\lambda_\epsilon} e^{\lambda} \hat{\bm{x}}_{\btheta}(\hat{\bx}_\lambda,\lambda)\rmd\lambda.
\end{equation}
Let $\bar{\bm{x}}_T = \bar{G}^\dagger_t(\bar{\bm{x}}_t) = \frac{\sigma_T}{\sigma_t} \bar{\bm{x}}_t + \sigma_T \int_{\lambda_t}^{\lambda_T} e^{\lambda} \hat{\bm{x}}_{\btheta}(\hat{\bx}_\lambda,\lambda)\rmd\lambda$. We have 
\begin{align}
    &\bar{G} \circ \bar{G}^\dagger_t(\bar{\bm{x}}_t) = \bar{G}(\bar{\bm{x}}_T) \\
    & = \frac{\sigma_\epsilon}{\sigma_\epsilon} \bar{\bm{x}}_T + \sigma_\epsilon \int_{\lambda_T}^{\lambda_\epsilon} e^{\lambda} \hat{\bm{x}}_{\btheta}(\hat{\bx}_\lambda,\lambda)\rmd\lambda \\
    & = \frac{\sigma_\epsilon}{\sigma_T} \left(\frac{\sigma_T}{\sigma_t} \bar{\bm{x}}_t + \sigma_T \int_{\lambda_t}^{\lambda_T} e^{\lambda} \hat{\bm{x}}_{\btheta}(\hat{\bx}_\lambda,\lambda)\rmd\lambda\right) +\sigma_\epsilon \int_{\lambda_T}^{\lambda_\epsilon} e^{\lambda} \hat{\bm{x}}_{\btheta}(\hat{\bx}_\lambda,\lambda)\rmd\lambda \\
    & = \frac{\sigma_\epsilon}{\sigma_t} \bar{\bm{x}}_t + \sigma_\epsilon \int_{\lambda_t}^{\lambda_T} e^{\lambda} \hat{\bm{x}}_{\btheta}(\hat{\bx}_\lambda,\lambda)\rmd\lambda + \sigma_\epsilon \int_{\lambda_T}^{\lambda_\epsilon} e^{\lambda} \hat{\bm{x}}_{\btheta}(\hat{\bx}_\lambda,\lambda)\rmd\lambda\\
    & = \frac{\sigma_\epsilon}{\sigma_t} \bar{\bm{x}}_t + \sigma_\epsilon \int_{\lambda_t}^{\lambda_\epsilon} e^{\lambda} \hat{\bm{x}}_{\btheta}(\hat{\bx}_\lambda,\lambda)\rmd\lambda \\
    & = \bar{\bx}_\epsilon. 
\end{align}
Therefore, we obtain
\begin{align}
    &\|G\circ G^{\dagger}_t(\bar{\bx}_t) - \bar{\bx}_\epsilon\|_2 = \|G\circ G^{\dagger}_t(\bar{\bx}_t) - \bar{G}\circ \bar{G}^{\dagger}_t(\bar{\bx}_t)\|_2 \\
    & \le  \|G\circ G^{\dagger}_t(\bar{\bx}_t) - G\circ \bar{G}^{\dagger}_t(\bar{\bx}_t)\|_2 + \|G\circ \bar{G}^{\dagger}_t(\bar{\bx}_t) - \bar{G}\circ \bar{G}^{\dagger}_t(\bar{\bx}_t)\|_2 \\
    & \le L \|G^{\dagger}_t(\bar{\bx}_t) - \bar{G}^{\dagger}_t(\bar{\bx}_t)\|_2 + \|G(\bar{\bx}_T)-\bar{G}(\bar{\bx}_T)\|_2,\label{eq:thm_imp1}
\end{align}
where the last inequality follows from the $L$-Lipschitz continuity of $G$ and $\bar{\bx}_T = \bar{G}^{\dagger}_t(\bar{\bx}_t)$. Under the regularity conditions listed above, similarly to~\citep[Theorem~3.2]{lu2022dpm},\footnote{Note that the term $O(h_{\max}^k)$ in the statement of~\citep[Theorem~3.2]{lu2022dpm} is added element-wise. Thus when considering the $\ell_2$ norm, we have an extra $\sqrt{n}$ factor, where $n$ is the data dimension.}  we have
\begin{equation}
    \|G^{\dagger}_t(\bar{\bx}_t) - \bar{G}^{\dagger}_t(\bar{\bx}_t)\|_2 = O(\sqrt{n} h_{\max}^{k_1})\label{eq:thm_imp2}
\end{equation}
and 
\begin{equation}
    \|G(\bar{\bx}_T)-\bar{G}(\bar{\bx}_T)\|_2 = O(\sqrt{n} h_{\max}^{k_2}).\label{eq:thm_imp3}
\end{equation}
Combining Eqs.~\eqref{eq:thm_imp1},~\eqref{eq:thm_imp2}, and~\eqref{eq:thm_imp3}, we obtain the desired result. 
    
\section{Experimental Results for CIFAR-10 with 1-bit and cubic Measurements}
\label{sec:cubic_cifar10}

In this section, we present the experimental results obtained from the CIFAR-10 dataset~\cite{krizhevsky2009learning} using 1-bit and cubic measurements. For the CIFAR-10 dataset, the ambient dimension is $n = 32\times 32\times 3=3072$, and the pixel values are normalized to the range $[0, 1]$. Prior experimental findings have shown that the \texttt{SIM-DMIS} method outperforms \texttt{SIM-DMS} and \texttt{SIM-DMFIS} in terms of reconstruction performance. Therefore, in this study, we limit our comparison to our \texttt{SIM-DMIS} approach against \texttt{QCS-SGM}, \texttt{DPS-N}, and \texttt{DPS-L}. For 1-bit measurements, since \texttt{QCS-SGM} cannot handle cubic measurements, we compare against \texttt{OneShot} proposed in~\cite{liu2022non}, which uses GANs to solve nonlinear signal recovery problems. Additionally, since we have observed from the results for 1-bit measurements that~\texttt{SIM-DMFIS} does not perform well, we do not compare against it. 
The experimental results demonstrate the superior performance of our proposed method~\texttt{SIM-DMIS} across different datasets and measurement settings.

\paragraph{Results on CIFAR-10 (32$\times$32) with 1-Bit Measurements}

For 1-bit measurements, we report results under two measurement settings: $m = 500$ and $m = 1000$, corresponding to approximately $16\%$ and $32\%$ of the ambient dimension $n$. We utilize a pre-trained unconditional DDPM model with the variance preserving (VP) schedule\footnote{https://huggingface.co/google/ddpm-cifar10-32\label{web:cifar}} for CIFAR-10. Prior experimental findings have shown that the \texttt{SIM-DMIS} method outperforms \texttt{SIM-DMS} and \texttt{SIM-DMFIS} in terms of reconstruction performance. Therefore, in this study, we limit our comparison to our \texttt{SIM-DMIS} approach against \texttt{QCS-SGM}, \texttt{DPS-N}, and \texttt{DPS-L}. As shown in Table~\ref{table:cifar_combined}, \texttt{SIM-DMIS} consistently outperforms existing approaches across both $m=500$ and $m=1000$. Qualitative results in Figure~\ref{fig:qual_1bit_cifar} further confirm its strong reconstruction capability.

\begin{table}[htbp]
\centering
\caption{Quantitative results for 1-bit with $m=500$ and $m=1000$ measurements. We mark \textbf{bold} the best scores, and \underline{underline} the second best scores.}
\setlength{\tabcolsep}{3pt}
\begin{tabular}{c c *{3}{c} *{3}{c}}
    \toprule
    \multirow{2}{*}{Method} & \multirow{2}{*}{NFE} 
        & \multicolumn{3}{c}{$m = 500$} 
        & \multicolumn{3}{c}{$m = 1000$} \\
    \cmidrule(lr){3-5} \cmidrule(lr){6-8}
        & & PSNR ($\uparrow$) & SSIM ($\uparrow$) & LPIPS ($\downarrow$) 
          & PSNR ($\uparrow$) & SSIM ($\uparrow$) & LPIPS ($\downarrow$) \\
    \midrule
    \texttt{QCS-SGM} & 1160 
        & 8.40$\pm$3.75      & \underline{0.25}$\pm$0.19 & \underline{0.55}$\pm$0.20 
        & 11.76$\pm$4.59     & \underline{0.41}$\pm$0.22 & \underline{0.41}$\pm$0.18 \\
    \texttt{DPS-N}   & 1000 
        & 9.43$\pm$1.89      & 0.04$\pm$0.06             & 0.57$\pm$0.05 
        & 9.70$\pm$2.19      & 0.04$\pm$0.06             & 0.65$\pm$0.08 \\
    \texttt{DPS-L}   & 1000 
        & \underline{11.53}$\pm$1.59 & 0.08$\pm$0.02 & 0.61$\pm$0.03 
        & \underline{12.24}$\pm$2.71 & 0.10$\pm$0.05 & 0.69$\pm$0.07 \\
    \texttt{SIM-DMIS} & 150 
        & \textbf{16.24}$\pm$1.65 & \textbf{0.55}$\pm$0.10 & \textbf{0.40}$\pm$0.10 
        & \textbf{18.61}$\pm$1.65 & \textbf{0.65}$\pm$0.08 & \textbf{0.31}$\pm$0.07 \\
    \bottomrule
\end{tabular}
\label{table:cifar_combined}
\end{table}

\begin{figure}[htbp]
    \centering
    \includegraphics[width=\columnwidth]{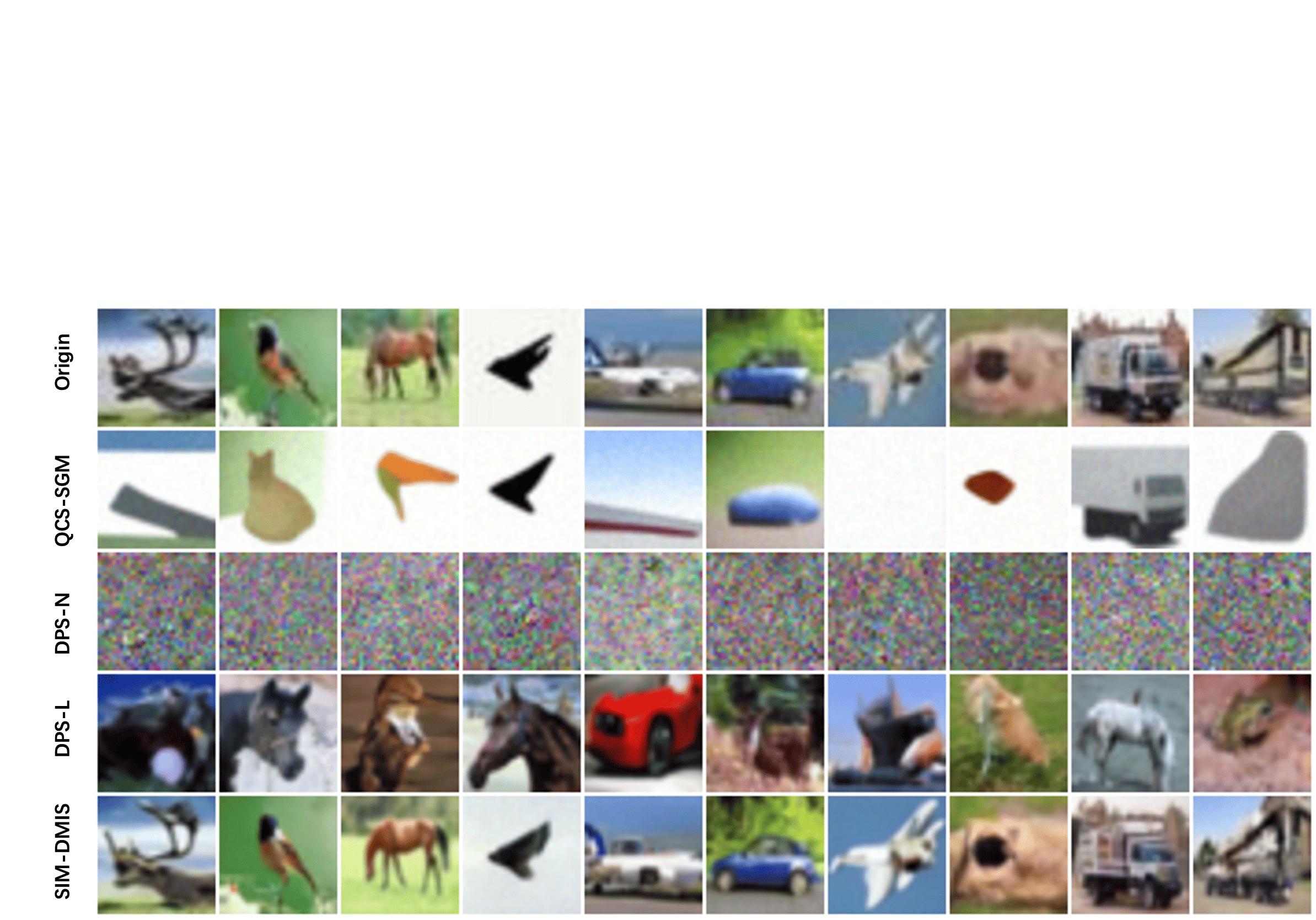}
    \caption{Examples of reconstructed images for CIFAR-10 with $m=1000$ 1-bit measurements.}
    \label{fig:qual_1bit_cifar}
\end{figure}

\paragraph{Results on CIFAR-10 (32$\times$32) with Cubic Measurements} The quantitative results are shown in Table~\ref{tab:cub_cifar}, and examples of reconstructed images are presented in Figure~\ref{fig:qual_cubic_cifar}. \texttt{SIM-DMIS} consistently outperforms \texttt{OneShot} across all measurement settings. Specifically, with $m = 500$ measurements, \texttt{SIM-DMIS} achieves approximately 15\% higher PSNR and 2\% better SSIM compared to \texttt{OneShot}. With $m = 1000$ measurements, \texttt{SIM-DMIS} achieves approximately 25\% higher PSNR and 12\% better SSIM compared to \texttt{OneShot}. Furthermore, \texttt{SIM-DMIS} consistently yields a lower LPIPS value, indicating better perceptual similarity to the original images, compared to \texttt{OneShot}.

\begin{table}[htbp]
\centering
\caption{Quantitative results for CIFAR-10 with cubic measurements. We mark \textbf{bold} the best scores, and \underline{underline} the second best scores.}
\small
\setlength{\tabcolsep}{4pt}
\begin{tabular}{c *{6}{c}}
    \toprule
    \multirow{2}{*}{Method} & \multicolumn{3}{c}{$m = 500$} & \multicolumn{3}{c}{$m = 1000$} \\
    \cmidrule(lr){2-4} \cmidrule(lr){5-7}
    & PSNR ($\uparrow$) & SSIM ($\uparrow$) & LPIPS ($\downarrow$) & PSNR ($\uparrow$) & SSIM ($\uparrow$) & LPIPS ($\downarrow$) \\
    \midrule
    \texttt{OneShot}  &
    13.68$\pm$2.09 & 0.33$\pm$0.09 & 0.50$\pm$0.04 &
    14.61$\pm$2.09 & 0.39$\pm$0.09 & 0.48$\pm$0.04 \\

    \texttt{SIM-DMIS}  &
    \textbf{15.80$\pm$2.02}  & \textbf{0.45$\pm$0.10} & \textbf{0.43$\pm$0.08} &
    \textbf{16.97$\pm$2.21} & \textbf{0.55$\pm$0.12} & \textbf{0.38$\pm$0.09} \\
    \bottomrule
\end{tabular}

\label{tab:cub_cifar}
\end{table}

\begin{figure}[htbp]
    \centering
    \includegraphics[width=\columnwidth]{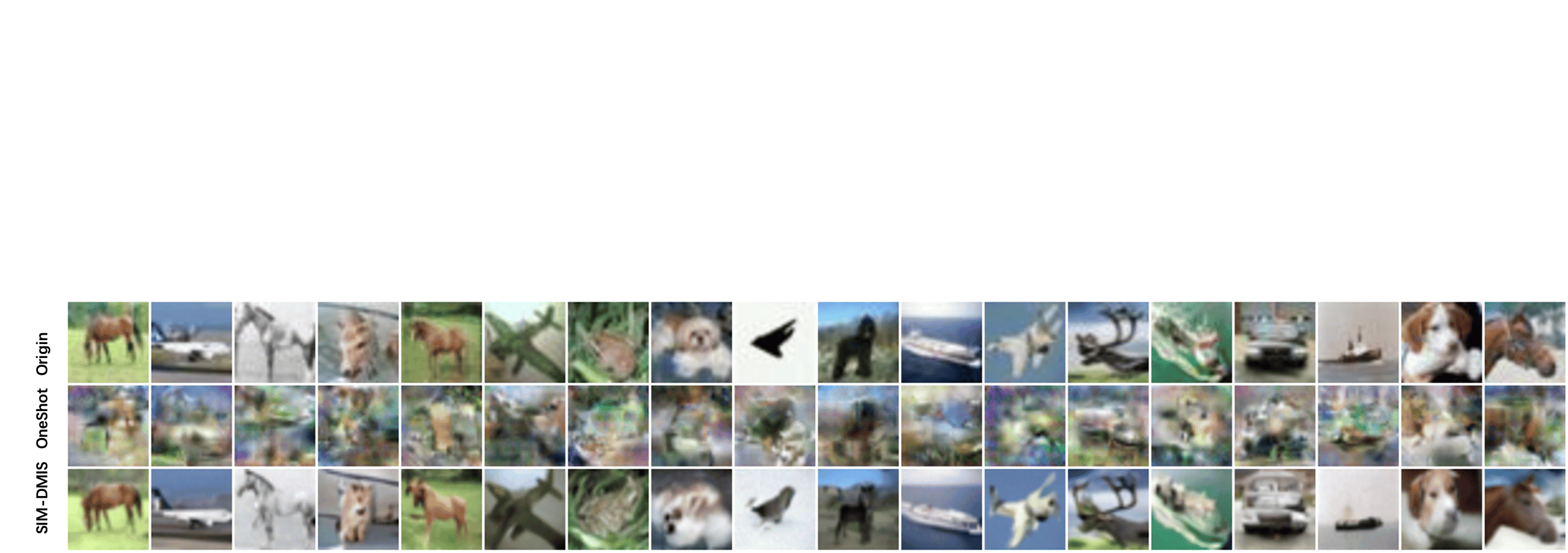}
    \caption{Examples of reconstructed images for CIFAR-10 with $m=1000$ cubic measurements.}
    \label{fig:qual_cubic_cifar}
\end{figure}
\section{Additional Examples of Reconstructed Images for FFHQ and ImageNet with 1-bit Measurements}

In this section, we present some additional examples of reconstructed images for the FFHQ 256$\times$256 dataset in Figure~\ref{fig:qual_comparison_ffhq_1bit}, and also present some additional examples of reconstructed images for the ImageNet 256$\times$256 dataset in Figure~\ref{fig:qual_comparison_imagenet_1bit}.

%FFHQ
\begin{figure}[htbp]
    \centering
    \begin{minipage}{\linewidth}
        \centering
        \includegraphics[width=0.95\linewidth]{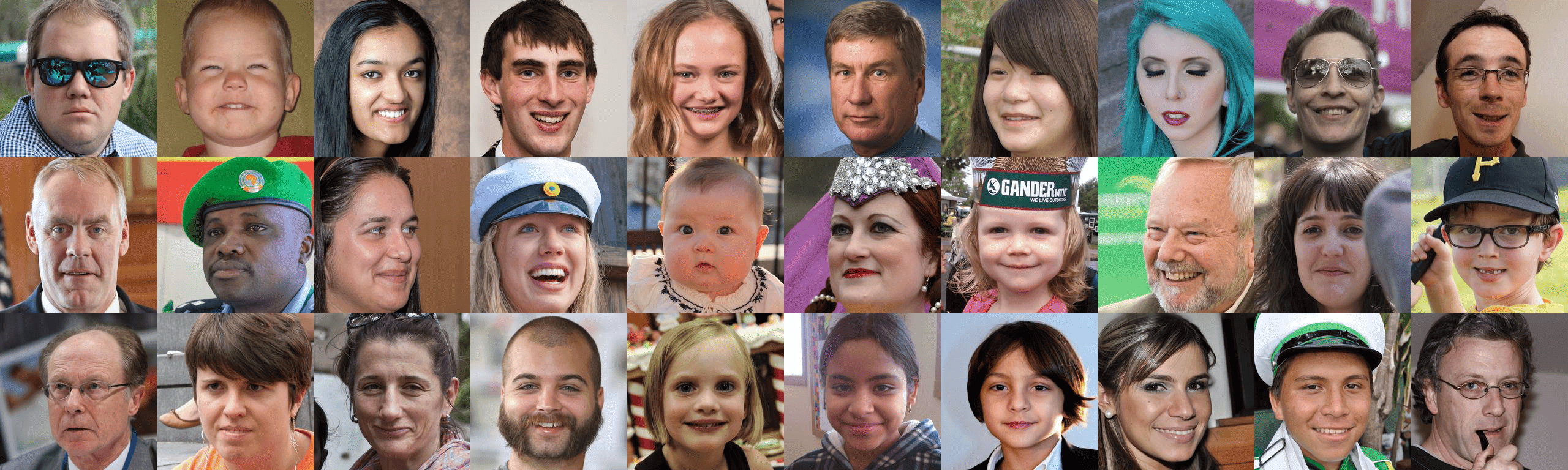}%
        \vspace{1em}  % 添加垂直间距
        
        \includegraphics[width=0.95\linewidth]{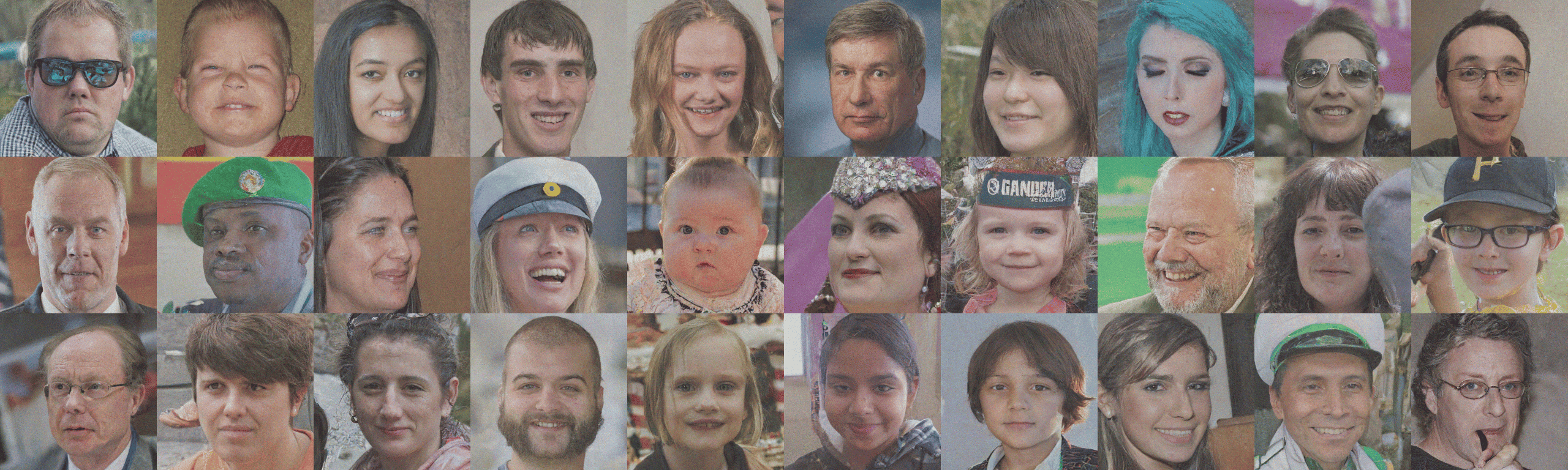}%
        \vspace{1em}  % 添加垂直间距
        
        \includegraphics[width=0.95\linewidth]{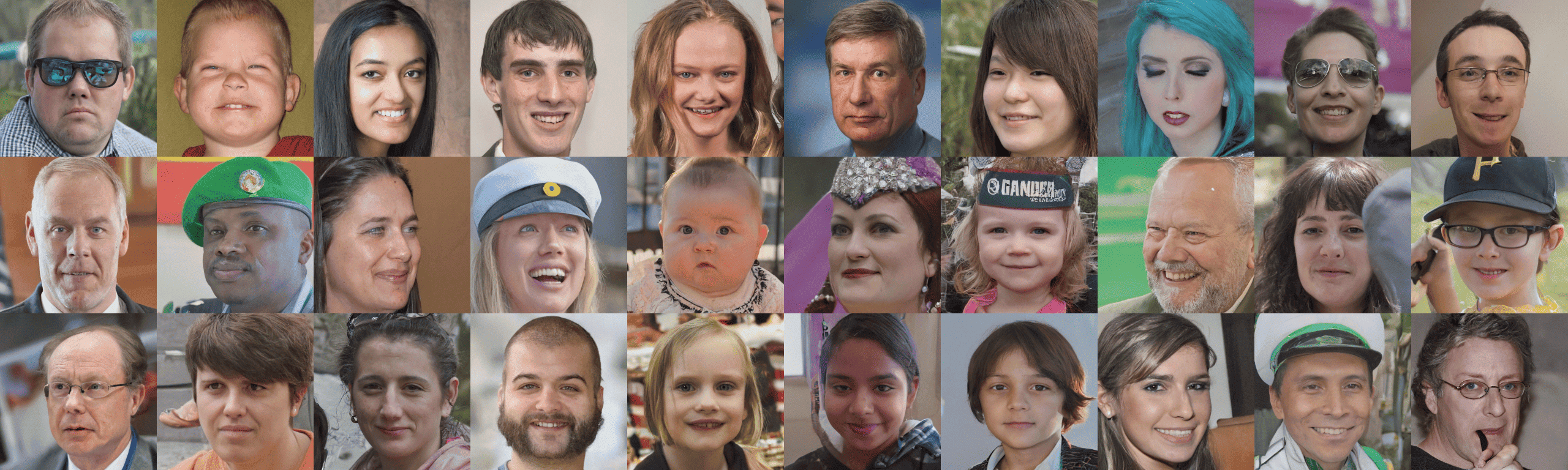}%
    \end{minipage}
    \caption{Examples of reconstructed images for FFHQ with $m=n/8=24576$ 1-bit measurements. Top: Origin, Middle: SIM-DMS, Bottom: SIM-DMIS.}
    \label{fig:qual_comparison_ffhq_1bit}
\end{figure}

\begin{figure}[hbtp]
    \centering
    \includegraphics[width=\linewidth]{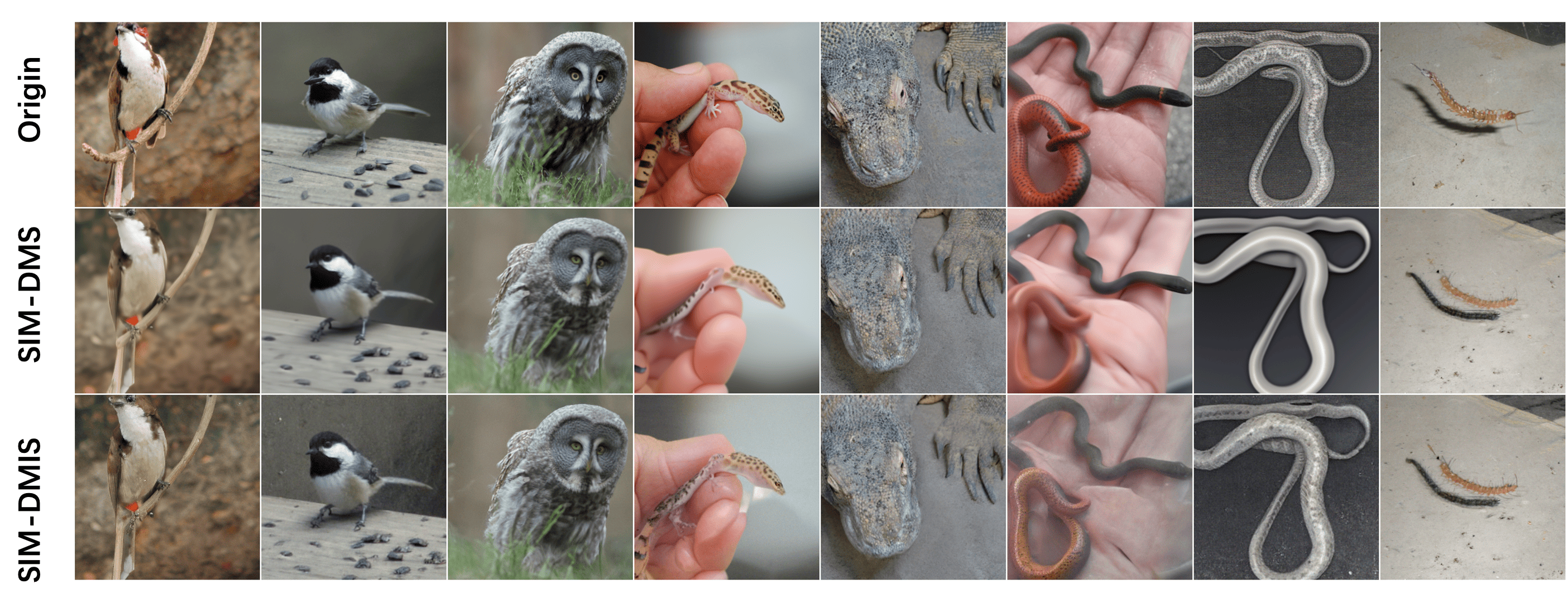}
    \caption{\centering Examples of reconstructed images for ImageNet with $m = n/16 = 12288$ 1-bit measurements.}
    \label{fig:qual_comparison_imagenet_1bit}
\end{figure}

\clearpage

\section{Ablation Study for SIM-DMS}
\label{sec:DMS_parameter}

In this section, we present a detailed analysis of the ablation experiments conducted on \texttt{SIM-DMS} using the CIFAR-10 dataset with $m = 500$ 1-bit measurements. The experimental setup follows the same framework as in the main body of the research, and the sampling technique is applied throughout.

The results shown in Table~\ref{tab:dms_ablation} provide valuable insights into the performance of \texttt{SIM-DMS} under different parameter configurations. Specifically, we analyze the impact of hyperparameters $C_s$, $C_s'$, and the number of function evaluations on model performance.

When varying $C_s$ while fixing $C_s' = 55$ and NFE = 50, we observe a clear upward trend in both PSNR and SSIM, indicating improved signal quality and better structural similarity with increasing $C_s$. However, LPIPS reaches its minimum value at a moderate $C_s$, suggesting that the best perceptual quality is achieved with balanced parameter settings. On the other hand, when $C_s$ is fixed at 55 and $C_s'$ is varied, we find that lower values of $C_s'$ yield better performance across all metrics, with PSNR and SSIM reaching their peak at the lowest tested value. LPIPS remains relatively stable but shows a slight decrease at lower $C_s'$ values. Finally, when we adjust the NFE with $C_s = 55$ and $C_s' = 55$, all performance metrics show minimal fluctuation, suggesting that once NFE reaches a certain threshold, increasing it further does not yield significant improvements.

In conclusion, these results underscore the flexibility of \texttt{SIM-DMS} with respect to parameter tuning. The parameter $C_s$ demonstrates a clear impact on reconstruction quality, while lower values of $C_s'$ tend to yield better results. Additionally, the stability across different NFE values indicates the model's efficiency even with fewer iterations. These findings provide valuable guidance for parameter selection in practical applications of \texttt{SIM-DMS}.

\begin{table}[h]
\centering
\caption{Ablation  experiments for \texttt{SIM-DMS} on CIFAR-10 with $m=500$ 1-bit measurements, with different values of $C_s$, $C_s'$ and NFEs. We mark \textbf{bold} the best scores, and \underline{underline} the second best scores.}
\small
\setlength{\tabcolsep}{4pt}
\begin{tabular}{c *{14}{c}}
\toprule
& \multicolumn{5}{c}{$C_s$ ($C_{s}'=55$)} & \multicolumn{5}{c}{$C_{s}'$ ($C_s=55$)} & \multicolumn{4}{c}{NFE ($C_s=55$, $C_{s}'=55$)} \\
\cmidrule(lr){2-6} \cmidrule(lr){7-11} \cmidrule(lr){12-15}
Value & 50 & 52 & 55 & 58 & \textbf{60} & \textbf{50} & \underline{52} & 55 & 58 & 60 & 20 & 50 & 80 & 100 \\
\midrule
PSNR ($\uparrow$) & 14.441 & 14.625 & 14.860 & {14.956} & {14.971} & \textbf{15.337} & \underline{15.214} & 14.860 & 14.451 & 14.205 & 14.860 & 14.860 & 14.860 & 14.860 \\
& {\tiny $\pm$1.037} & {\tiny $\pm$1.119} & {\tiny $\pm$1.201} & {\tiny $\pm$1.275} & {\tiny \textbf{$\pm$1.334}} & {\tiny \textbf{$\pm$1.354}} & {\tiny \underline{$\pm$1.252}} & {\tiny $\pm$1.201} & {\tiny $\pm$1.185} & {\tiny $\pm$1.146} & {\tiny $\pm$1.201} & {\tiny $\pm$1.201} & {\tiny $\pm$1.201} & {\tiny $\pm$1.201} \\
SSIM ($\uparrow$) & 0.427 & 0.442 & 0.460 & {0.467} & {0.470} & \textbf{0.481} & \underline{0.475} & 0.460 & 0.440 & 0.426 & 0.460 & 0.460 & 0.460 & 0.460 \\
& {\tiny $\pm$0.096} & {\tiny $\pm$0.099} & {\tiny $\pm$0.103} & {\tiny $\pm$0.106} & {\tiny \textbf{$\pm$0.108}} & {\tiny \textbf{$\pm$0.107}} & {\tiny \underline{$\pm$0.104}} & {\tiny $\pm$0.103} & {\tiny $\pm$0.101} & {\tiny $\pm$0.099} & {\tiny $\pm$0.103} & {\tiny $\pm$0.103} & {\tiny $\pm$0.103} & {\tiny $\pm$0.103} \\
LPIPS ($\downarrow$) & 0.444 & {0.436} & {0.433} & 0.441 & 0.447 & 0.440 & \underline{0.436} & \textbf{0.433} & 0.440 & 0.447 & 0.433 & 0.433 & 0.433 & 0.433 \\
& {\tiny $\pm$0.090} & {\tiny $\pm$0.090} & {\tiny \textbf{$\pm$0.094}} & {\tiny $\pm$0.097} & {\tiny $\pm$0.099} & {\tiny $\pm$0.100} & {\tiny $\pm$0.098} & {\tiny \textbf{$\pm$0.094}} & {\tiny $\pm$0.091} & {\tiny $\pm$0.090} & {\tiny $\pm$0.094} & {\tiny $\pm$0.094} & {\tiny $\pm$0.094} & {\tiny $\pm$0.094} \\
\bottomrule
\end{tabular}

\label{tab:dms_ablation}
\end{table}

\section{Ablation Study for SIM-DMIS}
\label{sec:DMIS_parameter}

In this section, we present a detailed analysis of the ablation experiments conducted on \texttt{SIM-DMIS} using the CIFAR-10 dataset with a measurement of $m = 500$. The experimental setup follows the same framework as in the main body of the research, and the DDIM sampling and DM2M inversion technique is applied throughout. The results are shown in Table~\ref{tab:dmis_ablation}.

When varying $C_s$ while fixing $C_s' = 55$ and NFE = 50, the trend of PSNR and SSIM are consistent with the \texttt{SIM-DMS} where higher values of $C_s$ also led to better signal quality and structural similarity. 
On the other hand, when $C_s$ is fixed at 55 and $C_s'$ is varied, the trends for PSNR and SSIM are largely similar to the \texttt{SIM-DMS} results, where lower values of $C_s'$ lead to better performance in these metrics. Interestingly, while the improvement in PSNR and SSIM is clear, LPIPS remains relatively stable across a wide range of $C_s'$ values, with a slight decrease at lower values of $ C_s' $, particularly around 50 and 52. 
Finally, when adjusting the NFE with $C_s = 55$ and $C_s' = 55$, the results show minimal fluctuation in all performance metrics. This observation is consistent with the findings from \texttt{SIM-DMS}, where a similar behavior was noted.
 
In summary, the findings from the ablation study of \texttt{SIM-DMIS} underline several important insights. The parameter $C_s$ has a notable impact on both the signal quality and structural similarity of the reconstructed images, with higher values leading to better PSNR and SSIM. The parameter $C_s'$ plays a crucial role in enhancing PSNR and SSIM, with lower values yielding better results, while LPIPS remains stable. Additionally, the minimal effect of increasing NFE suggests that \texttt{SIM-DMIS} is highly efficient and provides stable performance even with fewer evaluations. 

\begin{table}[h]
\centering
\caption{Ablation experiments for~\texttt{SIM-DMIS} on CIFAR-10 with $m=500$ 1-bit measurements. We mark \textbf{bold} the best scores, and \underline{underline} the second best scores.}
\small
\setlength{\tabcolsep}{4pt}
\begin{tabular}{c *{14}{c}}
\toprule
& \multicolumn{5}{c}{$C_s$ ($C_{s}'=55$, NFE=150)} & \multicolumn{5}{c}{$C_{s}'$ ($C_s=55$, NFE=150)} & \multicolumn{4}{c}{NFE ($C_s=55$, $C_{s}'=55$)} \\
\cmidrule(lr){2-6} \cmidrule(lr){7-11} \cmidrule(lr){12-15}
Value & 50 & 52 & 55 & 58 & 60 & 50 & 52 & 55 & 58 & 60 & 100 & 150 & 200 & 250 \\
\midrule
PSNR ($\uparrow$) & 15.388 & 15.562 & 15.733 & {15.809} & 15.820 & \textbf{16.247} & \underline{16.091} & 15.733 & 15.322 & 15.057 & 15.733 & 15.733 & 15.733 & 15.733 \\
& {\tiny $\pm$1.218} & {\tiny $\pm$1.331} & {\tiny $\pm$1.420} & {\tiny $\pm$1.513} & {\tiny $\pm$1.592} & {\tiny $\pm$1.655} & {\tiny $\pm$1.500} & {\tiny $\pm$1.420} & {\tiny $\pm$1.355} & {\tiny $\pm$1.307} & {\tiny $\pm$1.420} & {\tiny $\pm$1.420} & {\tiny $\pm$1.420} & {\tiny $\pm$1.420} \\
SSIM ($\uparrow$) & 0.489 & 0.506 & 0.522 & 0.531 & {0.534} & \textbf{0.548} & \underline{0.540} & 0.522 & 0.500 & 0.484 & 0.522 & 0.522 & 0.522 & 0.522 \\
& {\tiny $\pm$0.097} & {\tiny $\pm$0.098} & {\tiny $\pm$0.099} & {\tiny $\pm$0.101} & {\tiny $\pm$0.104} & {\tiny $\pm$0.101} & {\tiny $\pm$0.099} & {\tiny $\pm$0.099} & {\tiny $\pm$0.099} & {\tiny $\pm$0.099} & {\tiny $\pm$0.099} & {\tiny $\pm$0.099} & {\tiny $\pm$0.099} & {\tiny $\pm$0.099} \\
LPIPS ($\downarrow$) & 0.406 & \textbf{0.399} & \underline{0.402} & 0.411 & 0.417 & 0.409 & 0.405 & 0.402 & 0.404 & 0.411 & 0.402 & 0.402 & 0.402 & 0.402 \\
& {\tiny $\pm$0.095} & {\tiny $\pm$0.093} & {\tiny $\pm$0.095} & {\tiny $\pm$0.100} & {\tiny $\pm$0.102} & {\tiny $\pm$0.103} & {\tiny $\pm$0.098} & {\tiny $\pm$0.095} & {\tiny $\pm$0.093} & {\tiny $\pm$0.094} & {\tiny $\pm$0.095} & {\tiny $\pm$0.095} & {\tiny $\pm$0.095} & {\tiny $\pm$0.095} \\
\bottomrule
\end{tabular}

\label{tab:dmis_ablation}
\end{table}
\section{Comparative Evaluation of SIM-DMS and SIM-DMIS on Performance and Efficiency}
\label{sec:DMIS_DMS_rebuttal}
In this section, we evaluate the performance and efficiency of \texttt{SIM-DMS} and \texttt{SIM-DMIS} on the FFHQ dataset. The experimental setup follows the same framework as in the main body of the research. The results presented in the main paper (Table~\ref{table:quan_ffhq}) show that \texttt{SIM-DMIS} consistently outperforms \texttt{QCS-SGM} and \texttt{SIM-DMFIS} in both reconstruction quality and NFE. Notably, \texttt{QCS-SGM} requires over 11,000 NFEs to achieve competitive results, incurring a high computational cost. Due to the performance of \texttt{SIM-DMIS} over both methods, and the particularly high computational cost of \texttt{QCS-SGM}, we exclude \texttt{QCS-SGM} and \texttt{SIM-DMFIS} from the comparisons in this section.

\paragraph{Performance Evaluation}
 As discussed in Footnote \ref{fn_dms} and Appendices~\ref{sec:DMS_parameter} and~\ref{sec:DMIS_parameter}, ablation studies on CIFAR-10 show that further increasing the NFE for SIM-DMS does not lead to substantial performance enhancements.

 To further verify this behavior and to compare the relative performance of \texttt{SIM-DMS} and \texttt{SIM-DMIS}, we conduct additional experiments on the FFHQ dataset. The results, summarized in Table~\ref{tab:ablation-ffhq}, indicate that under the same NFEs, \texttt{SIM-DMIS} consistently outperforms \texttt{SIM-DMS} across all standard metrics.

\begin{table}[!htbp]
\centering
\caption{Quantitative results on FFHQ ($m=n/8=24576$).
We mark \textbf{bold} the best scores, and \underline{underline} the second best scores.}
\label{tab:ablation-ffhq}
\setlength{\tabcolsep}{4pt}
\begin{tabular}{lccccc}
\toprule
Method & NFE & PSNR ($\uparrow$) & SSIM ($\uparrow$) & LPIPS ($\downarrow$) & FID ($\downarrow$) \\
\midrule
\texttt{DPS-N}     & 1000  & 11.14$\pm$1.46 & 0.37$\pm$0.09 & 0.69$\pm$0.05 & 349.24 \\
\texttt{DPS-L}     & 1000  &  8.57$\pm$2.05 & 0.22$\pm$0.08 & 0.69$\pm$0.09 & 109.01 \\
\texttt{DAPS-N}    & 1000  & 16.59$\pm$0.54 & 0.33$\pm$0.05 & \underline{0.48}$\pm$0.05 & 138.70 \\
\texttt{DAPS-L}    & 1000  &  5.63$\pm$0.71 & 0.04$\pm$0.03 & 0.61$\pm$0.03 &  322.75 \\

\texttt{SIM-DMS}   &  50 & 17.14$\pm$2.41 & 0.44$\pm$0.07 & \underline{0.48}$\pm$0.05 & 105.04 \\
\texttt{SIM-DMS}   & 150 & 17.72$\pm$2.63 & 0.46$\pm$0.08 & \underline{0.48}$\pm$0.06 &  95.52 \\
\texttt{SIM-DMIS}&  50 & \underline{18.78}$\pm$3.09 & \underline{0.58}$\pm$0.10 & \underline{0.41}$\pm$0.08 & \underline{89.47} \\
\texttt{SIM-DMIS}  & 150 & \textbf{19.87}$\pm$2.77 & \textbf{0.60}$\pm$0.09 & \textbf{0.37}$\pm$0.05 & \textbf{76.21} \\

\bottomrule
\end{tabular}
\end{table}

\paragraph{Reconstruction Speed}
We conduct additional experiments to measure the reconstruction speed of \texttt{SIM-DMS} and \texttt{SIM-DMIS}. The inference time is averaged over 10 validation images from the FFHQ dataset. All experiments are executed on a single NVIDIA GeForce RTX 4090 GPU. As shown in Table~\ref{tab:ablation-speed}, the results indicate that \texttt{SIM-DMS} and \texttt{SIM-DMIS} exhibit significantly faster reconstructions when compared to the competing methods.

\begin{table}[!htbp]
\centering
\caption{Comparison of inference time (in seconds) for reconstructing 10 images on FFHQ. We mark \textbf{bold} the best scores, and \underline{underline} the second best scores.}
\label{tab:ablation-speed}
\begin{tabular}{lcc}
\toprule
Method & NFE & Inference Time (s) ($\downarrow$) \\
\midrule
\texttt{DPS-N}     & 1000  & 142 \\
\texttt{DPS-L}     & 1000  & 142 \\
\texttt{DAPS-N}    & 1000  & 160 \\
\texttt{DAPS-L}    & 1000  & 160 \\
\texttt{SIM-DMS}   & 50    & \textbf{1.96} \\
\texttt{SIM-DMIS}  & 150   & \underline{5.66} \\
\bottomrule
\end{tabular}
\end{table}

%%%%%%%%%%%%%%%%%%%%%%%%%%%%%%%%%%%%%%%%%%%%%%%%%%%%%%%%%%%%%%%%%%%%%%%%%%%%%%%
%%%%%%%%%%%%%%%%%%%%%%%%%%%%%%%%%%%%%%%%%%%%%%%%%%%%%%%%%%%%%%%%%%%%%%%%%%%%%%%

\end{document}